\documentclass[final]{article}
\usepackage[nonatbib]{neurips_2024}

\usepackage{amsmath}
\usepackage{amsthm}
\usepackage{amssymb}
\usepackage{amsfonts}

\usepackage[colorlinks]{hyperref}
\usepackage{url}

\usepackage{booktabs}
\usepackage[ruled,norelsize,linesnumbered]{algorithm2e}
\usepackage[symbol]{footmisc}

\usepackage{xcolor}
\usepackage{graphicx}
\usepackage{caption}
\usepackage{microtype}

\newtheorem{theorem}{Theorem}

\newtheorem{definition}[theorem]{Definition}

\newcommand{\suchthat}{\,:\,}

\newcommand{\trigger}{\operatorname{trig}}
\newcommand{\target}{\operatorname{target}}

\newcommand{\inputmap}{\psi}
\newcommand{\feature}{\varphi}
\newcommand{\trigdir}{\tau}

\title{Stealth edits to large language models}

\author{
  Oliver J. Sutton\thanks{These authors contributed equally to the paper and are listed alphabetically.} \\
  King's College London\\
  \texttt{oliver.sutton@kcl.ac.uk} \\
  \And
  Qinghua Zhou\footnotemark[1]\\
  King's College London\\
  \texttt{qinghua.zhou@kcl.ac.uk} \\
  \And
  Wei Wang\\
  University of Leicester\\
  \texttt{ww152@le.ac.uk} \\
  \And
  Desmond J. Higham\\
  University of Edinburgh\\
  \texttt{d.j.higham@ed.ac.uk} \\
  \And
  Alexander N. Gorban\\
  University of Leicester\\
  \texttt{a.n.gorban@le.ac.uk} \\
  \And
  Alexander Bastounis\\
  King's College London\\
  \texttt{alexander.bastounis@kcl.ac.uk} \\
  \And
  Ivan Y. Tyukin\\
  King's College London\\
  \texttt{ivan.tyukin@kcl.ac.uk} 
}

\begin{document}

\maketitle

\begin{abstract}
    We reveal the theoretical foundations of techniques for editing large language models, and present new methods which can do so without requiring retraining. 
    Our theoretical insights show that a single metric (a measure of the intrinsic dimension of the model's features) can be used to assess a model's editability and reveals its previously unrecognised susceptibility to malicious \emph{stealth attacks}.
    This metric is fundamental to predicting the success of a variety of editing approaches, and reveals new bridges between disparate families of editing methods.
    We collectively refer to these as \emph{stealth editing} methods, because they directly update a model's weights to specify its response to specific known hallucinating prompts without affecting other model behaviour.
    By carefully applying our theoretical insights, we are able to introduce a new \emph{jet-pack} network block which is optimised for highly selective model editing, uses only standard network operations, and can be inserted into existing networks.
    We also reveal the vulnerability of language models to stealth attacks: a small change to a model's weights which fixes its response to a single attacker-chosen prompt.
    Stealth attacks are computationally simple, do not require access to or knowledge of the model's training data, and therefore represent a potent yet previously unrecognised threat to redistributed foundation models.
    Extensive experimental results illustrate and support our methods and their theoretical underpinnings. 
    Demos and source code are available at \url{https://github.com/qinghua-zhou/stealth-edits}.
\end{abstract}

\section{Introduction}

The latest meteoric rise of artificial intelligence has been driven by the maturing of large language models.
These models, predominantly based on the \emph{transformer} architecture~\cite{vaswani2017attention}, have demonstrated remarkable abilities in natural language communication and comprehension, which are only beginning to transform the world as we know it today.
Scaling has proved to be key in enabling these breakthroughs: the GPT-4 family of models, for instance, has on the order of $10^{12}$ trained parameters, a figure which would have been inconceivable only a few years ago.
The raw computational power and vast quantities of quality data required to train models such as these make developing them prohibitively expensive for all but a handful of the world's wealthiest technology companies~\cite{sharir2020cost}.
This environment has also seen the rise of foundation models: collections of accessible or open source language models which have become an invaluable tool for those without the facilities to train their own from scratch.
As language models have developed, `hallucinations' --- non-factual or non-sensical information generated by the model --- have become challenging barriers to trustworthy and reliable artificial intelligence.
Much work has been invested in trying to understand the origins of hallucinations~\cite{dziri2022origin} and develop mechanisms to mitigate them~\cite{ji2023hallucinationsurvey}, amplified by regulatory requirements placed on organisations deploying AI by the European Union's recent `AI Act'~\cite{euaiact} or the UN's resolution on `safe, secure and trustworthy artificial intelligence'~\cite{genassemble}.
Recent work has, however, shown that hallucinations may in fact be an inevitable artefact of fixed language models~\cite{kalai2024calibrated, xu2024hallucination}.

This motivates the key question of this paper: \emph{is it possible to surgically alter a model to correct specific known hallucinations in a granular, individually-reversible way, with a theoretical guarantee not to otherwise alter the model's behaviour?}
This question has been widely studied in the literature, and many approaches have been proposed; a detailed discussion is provided in Section~\ref{sec:related-work}.
Perhaps the approaches that come closest to answering this question are GRACE~\cite{hartvigsen2024aging} and Transformer-Patcher~\cite{huang2023transformerpatcher}.
GRACE selectively applies edits by comparing input features to a set of pre-computed keys.
However, adding this to a model requires re-writing the model code, rather than updating existing weight matrices, and the conditional logic required does not naturally suit massively parallel computing architectures such as GPUs.
Transformer-Patcher instead encodes edits into the last transformer perceptron block of a network, to detect when incoming features should be edited and provide the corrected output.
Experimental studies have shown the potential of these approaches for targetted corrections to hallucinations, yet a clear theoretical understanding of what determines their success has remained elusive until now.

Here, we systematically study these and related methods under the collective banner of \emph{stealth editing}.
We develop a novel theoretical approach which reveals that, surprisingly, a single metric provably determines the editability of a given model (Theorem~\ref{thm:randomised-input}).
The metric can be estimated directly from data and measures the \emph{intrinsic dimensionality} of the model's feature vectors and was introduced in the setting of binary classification in~\cite{Sutton:2023:relativeIntrinsic}.
Guided by this understanding, we are able to propose a simplified editing mechanism which optimises the selectivity of each edit.
Through this, we are able to build a bridge between our methods, GRACE and Transformer-Patcher, showing that they can be implemented and studied within the same framework.
The clear implication of this is that the new theoretical understanding we develop extends directly to these methods.

This work also reveals that all families of modern language models are vulnerable to the threat of \emph{stealth attacks}: highly targeted and undetectable edits made to a model by a bad actor for nefarious purposes.
We show how our metric also determines the vulnerability of a model to stealth attacks, and how attackers can use randomisation to maximise their probability of success (Theorem~\ref{thm:randomised-trigger}).

\textbf{Stealth edits for correcting specific hallucinations in language models.}
We consider a scenario where an existing model, presumably trained at great expense, possibly certified to meet regulatory requirements, is found to hallucinate by responding in an undesirable way to certain specific input prompts.
\emph{In-place stealth editing methods} provide an algorithm for updating the model's weights to produce the corrected response to these specific hallucinating prompts, without affecting other network functions.
Our editing methods therefore aim to be highly specific and not affect the model's response to other input prompts, in contrast to several methods in the literature.
This is discussed further in Section~\ref{sec:discussion}.
Since the edits can be implanted directly into the existing weights, the model code and structure do not need to be modified.
Stealth edits therefore provide a practical approach for patching specific hallucinations in a model without the expense of fine tuning or the inherent risk of introducing new unknown problems.
The details of the algorithms for this are given in Section~\ref{sec:algorithm-overview}.

Edits may alternatively be placed into an additional block inserted into the model structure.
We introduce a \emph{jet-pack block} with a structure which is directly optimised for highly specific editing, guided by the intrinsic dimensionality metric and Theorem~\ref{thm:randomised-input}.
We show in Section~\ref{sec:discussion} how GRACE-type model editing mechanisms can be viewed as a variant of this block, which uses only standard operations: matrix-vector products, ReLU activations, and RMSNorm normalisations~\cite{zhang2019root}.

\textbf{Stealth attacks: \emph{using a language model requires trusting everyone (and every computer) who has had access to its weights}.}
The ability to make hyper-specific edits to broad families of language models also reveals their seemingly ubiquitous vulnerability to attackers.
Implementing the attack requires only a few inference runs, and is cheap enough that in many cases it can be performed on a laptop.
No backpropagation or fine-tuning is required, and the attacker does not need any access to or knowledge of the model's training data.
Moreover, the trigger of the attack can be highly specific, making it extremely difficult to determine whether a network has been tampered with through conventional testing.
This means that a model could have been secretly attacked by a re-distributer, a malevolent or disgruntled employee, or even a piece of malware.
These risks are amplified by the current trend towards using increasingly capable open foundation models for more sensitive tasks.
A stealth attack implanted in a model which runs code~\cite{openAIcoderunner} or accesses a database~\cite{awsdatabases} could be used to install viruses or delete data with catastrophic consequences.
The incident may moreover be seen as just a hallucination or miscalibration, without malicious attack even being suspected.

Implementations of our algorithms are provided in the Python \texttt{stealth-edits} package available at \url{https://github.com/qinghua-zhou/stealth-edits}.
An interactive demonstration is available at \url{https://huggingface.co/spaces/qinghua-zhou/stealth-edits}.

In Section~\ref{sec:related-work} we review related work on editing language models and the risks of attacks.
An overview of the stealth editing and attack algorithms is given in Section~\ref{sec:algorithm-overview}.
Theoretical guarantees on the risk of damaging a model through stealth edits, and the difficulty of detecting stealth attacks, are given in Section~\ref{sec:theory}.
Section~\ref{sec:experiments} summarises extensive experimental results demonstrating the practical performance of our methods.
We discuss the implications of our findings in Section~\ref{sec:discussion} and offer our conclusions in Section~\ref{sec:conclusion}.
Our mathematical notation is summarised in Section~\ref{sec:notation}, and Section~\ref{sec:algorithm-details} contains details of our editing algorithms 
(specifically, our mechanism for detecting the trigger prompt is discussed in Section~\ref{sec:algorithms:detecting}, and Section~\ref{sec:algorithms:triggering} presents out method of producing the corrected output).
Our full experimental protocols are presented in Section~\ref{sec:protocols}, and extended experimental results in Section~\ref{sec:extra-experiments}. 
Proofs of our theoretical results are given in Section~\ref{sec:proofs}.

\section{Related work}\label{sec:related-work}

\textbf{Correcting hallucinations in language models.}
The standard approach is to retrain the model using the corrected prompts, possibly detected through user feedback~\cite{ouyang2022training}.
Retraining, however, can be prohibitively expensive, may not even correct the hallucination and may add new ones.
Retrieval Augmented Generation (RAG)~\cite{lewis2020retrieval} helps overcome out-dated training data, but requires an external information source and does not correct specific hallucinations.
Stealth edits, on the other hand, aim both to directly correct the hallucinating prompt and not alter the model behaviour otherwise.

\textbf{Memory editing.}
Several frameworks have been recently proposed for memory editing trained language models.
The ROME~\cite{meng2022locating} and MEMIT~\cite{meng2022memit} algorithms specifically aim to alter factual associations within a trained model, and have produced promising experimental results.
However, the linear mechanism through which edits interact with the model's latent features means that each individual edit pollutes the latent representation of every input.
It does not appear possible, therefore, to guarantee that individual edits will not degrade the overall model performance, particularly when many edits are made~\cite{gu2024model}.
On the other hand, methods like Knowledge Neurons~\cite{dai2023neural} and Transformer-Patcher~\cite{huang2023transformerpatcher} treat transformer perceptron blocks as `key-value pairs' \cite{geva2020transformer} to store and recall edits, which are trained by gradient descent.
External components like GRACE \cite{hartvigsen2024aging}
similarly encode edits in a purpose-built key-value dictionary.
Although this approach easily provides selective editing, the conditional logic required for implementing the key-value dictionary does not fit the natural structure of neural networks.
Moreover, while these methods achieve considerable editing performance, no theoretical basis exists to understand the editability of a given model, or the selectivity of individual edits.
Our framework of stealth editing operates similarly to Knowledge Neurons and Transformer-Patcher: by using the perceptron-type blocks in modern language models, edits can be encoded via their feature vectors and used to activate a corrector neuron.
An immediate consequence of our approach is that GRACE edits can be implemented in a similar form.
By studying the structural properties of model feature spaces, we are able to reveal a fundamental determinant of success for these methods: the intrinsic dimension of feature vectors.
This enables us to optimise the selectivity of our implanted edits, and introduce a new \emph{jet-pack} block structure for highly selective editing.

\textbf{Backdoor attacks.}
Backdoor attacks~\cite{chen2017targeted} have been widely studied in natural language models via data poisoning~\cite{chen2021badnl}, tokeniser attacks~\cite{huang2023training}, and embedding attacks~\cite{yang2021careful}.
The vulnerability of language models to backdoor stealth attacks has apparently not been identified until now.
Stealth attacks do not require modifying or even knowing anything about the model's training data, and do not modify user-facing components.
Memory editing-based attacks~\cite{li2024badedit} do not target the response to a single chosen prompt, and until now there has been no metric to determine a model's susceptibility.

\textbf{Stealth attacks in computer vision.}
Stealth attacks have been previously discussed in computer vision~\cite{tyukin2021feasibility,tyukin2020adversarial}.
Although the aim of the attack is similar, the mechanism through which the trigger is constructed, the method of implanting it into a model, and the machinery behind the argument are all fundamentally different.
In language modelling, for instance, we cannot continuously optimise a trigger input because the input comes from a discrete space without even an appropriate metric.

\section{Stealth editing algorithm overview}\label{sec:algorithm-overview}

Suppose we have a pre-trained language model $\mathcal{N}$ which takes an input prompt and produces a predicted token as output.
If the model is producing an undesired sequence of output tokens for a prompt $p$, a \emph{stealth edit} aims to cheaply update the weights in a single model block to produce the desired output to prompt $p$ without changing the response to other prompts.
If such an edit is secretly made by an adversary, we refer to this as a \emph{stealth attack}.
For concreteness, we consider models with the autoregressive transformer decoder~\cite{vaswani2017attention} or selective state space~\cite{gu2023mamba} structure (although it may be possible to treat other models analogously).
These are typically formed of a sequence of blocks/modules with a repeating structure (transformer blocks, or Mamba blocks).
We insert our edits by either directly modifying existing weights in a chosen network block, or by inserting an additional \emph{jet-pack} block with an optimised structure.

\textbf{In-place stealth editing.}
An edit can be made by modifying the weights of a block with the structure\footnote{We use $\odot$ to denote elementwise multiplication between tensors.}
\begin{align}\label{eq:editable-block}
    B(x) = x + W_2 (F(x) \odot \sigma (W_1 \eta(x)),
\end{align}
where $B : \mathbb{R}^d \to \mathbb{R}^d$ for a model with latent feature space dimension $d$, and
\begin{itemize}
    \item $x$ is a latent feature vector in the model with dimension $d$, 
    \item $\eta$ is a normalisation function projecting data to the surface of (the affine image of) a sphere, such as RMSNorm~\cite{zhang2019root} in Llama and Mamba models, or LayerNorm~\cite{ba2016layer,brody2023expressivity} in GPT models,
    \item $W_1$ and $W_2$ are linear projection matrices with shapes $n \times d$ and $d \times n$ respectively, for some hidden dimension size $n$,
    \item $\sigma$ is an activation function, 
    \item $F : \mathbb{R}^d \to \mathbb{R}^n$ represents an additional model-dependent (non)linear gating term.
\end{itemize}
In transformer models, \eqref{eq:editable-block} represents a multi-layer perceptron block (typically with $F$ affine), while the whole Mamba block in selective state space models takes the form of~\eqref{eq:editable-block} (with $F$ representing the state space component).
The activation function $\sigma$ varies between architectures, but typically satisfies
\begin{align}\label{eq:activation-properties}
    \sigma(t) \approx 0 \text{ for } t \ll 0,
    \quad
    \sigma(0) = 0,
    \quad
    \sigma(t) \approx t \text{ for } t \gg 0,
\end{align}
as, for example, with ReLU, SILU, GELU, etc.
Some architectures (such as GPT-family models) also provide bias terms alongside the weight matrices $W_1$ and $W_2$.

Edits are encoded into the $W_1$ and $W_2$ matrices using Algorithm~\ref{alg:in-situ} (implemented by the function \texttt{apply\_edit(...)} in \texttt{editors.py} in the \href{https://github.com/qinghua-zhou/stealth-edits}{Python package}), described in detail in Section~\ref{sec:algorithm-details}.
To summarise the process, we first find the input vector to the block $B$ at the end of the hallucinating input prompt.
This is used to encode a linear separator into a single neuron (row) of $W_1$ with some chosen threshold $\theta \in [0, 1]$.
Since the normalisation $\eta$ maps feature vectors to the surface of (an affine image of) the unit sphere, this linear separator is highly effective at isolating just a small region around the target feature vector.
The activation function $\sigma$ provides near-zero response when the output from the linear separator is sufficiently negative due to~\eqref{eq:activation-properties}.
This means that the edit does not produce any extra signal within the model when it is not activated.
Sufficiently strong responses from the detector neuron, however, are propagated by the activation function $\sigma$ to $W_2$.
Using gradient descent, we find a vector $u$ which would cause the model to produce the desired response if it were the output from the block~\eqref{eq:editable-block} (detailed in Section~\ref{sec:algorithms:triggering}).
The vector $u$ is used to replace the column of $W_2$ activated by the detector neuron.
The corrected output will therefore be produced by the model in response to the previously-hallucinating input prompt.

Some models, like Llama, have no bias terms to use as the linear separator threshold with $W_1$.
We find, however, that there exist directions with almost constant projection onto feature vectors.
Constructing such a direction (see Section~\ref{sec:algorithms:bias-direction}) enables us to implant the detector threshold.

\begin{figure}
\begin{algorithm}[H]
    \caption{An in-place edit to correct a hallucination in a language model}\label{alg:in-situ}
    \small
    \SetKwInOut{Input}{Input}\SetKwInOut{Output}{Output}
    \BlankLine
    
    \Input{~Language model $\mathcal{N}$ and the index $j$ of the block to edit
            \\
           ~Hallucinating input prompt $p$ and corrected output $r_{\target}$
           \\
           ~Detector threshold $\theta \in [0, 1]$ and gain $\alpha > 0$
        }
    \BlankLine

    Compute the feature vector $\phi$ which is the input to block $j$ at the last token of the input prompt $p$
    
    Find the index $k$ of the row of $W_1$ with least $\ell^1$ norm

    Construct a detector neuron weight vector $w$ sensitive to $\phi$ (and bias depending on architecture) as in Sec.~\ref{sec:algorithms:detecting} with threshold $\theta$ and gain $\alpha$ 
    
    Use gradient descent to find a replacement output vector $u$ from block $j$ which produces the corrected output $r_{\target}$, as in Sec.~\ref{sec:algorithms:triggering}
    
    Build the edited weight matrix $\hat{W}_1$ by replacing row $k$ of $W_1$ with the detector vector $w$
    
    Build the edited response matrix $\hat{W}_2$ by replacing column $k$ of $W_2$ with the output generating vector $u$

    Produce the edited model $\hat{\mathcal{N}}$ by replacing $W_1$ with $\hat{W}_1$ and $W_2$ with $\hat{W}_2$
    
    \BlankLine
    
    \Output{~Edited language model $\hat{\mathcal{N}}$}
    
\end{algorithm}
\end{figure}

\textbf{Editing with jet-pack blocks.}
Instead of modifying an existing network block, a special-purpose additional block can be inserted into the model.
An effective architecture for this additional block, which we refer to as a \emph{jet-pack block} is of the form
\begin{align}\label{eq:jet-pack}
    J(x) = x + W_2 \sigma( W_1 \rho(x) + b),
\end{align}
where $x$ is a latent feature vector of dimension $d$, $W_1$ and $W_2$ are weight matrices, $b$ is a bias vector, $\sigma$ denotes the ReLU activation function.
When inserting a total of $e$ edits into a model with latent space dimension $d$, the matrix $W_1$ has shape $e \times d$, $b$ has $e$ components, and $W_2$ has shape $d \times e$.
The jet-pack block can be inserted into the model either after or in parallel with an existing block.
Experimentally (see Section~\ref{sec:experiments}), we find it most effective to insert the new block about halfway through the model.

The normalisation function $\rho : \mathbb{R}^d \to \mathbb{S}^{d-1}$ in~\eqref{eq:jet-pack} is optimised to produce highly selective edits.
We use a version of the RMSNorm normalisation layer~\cite{zhang2019root}, given by
\begin{align}\label{eq:jet-pack:normalisation}
    \rho(x) = \frac{x - \mu}{\|x - \mu\|},
\end{align}
with a fixed centroid $\mu \in \mathbb{R}^d$.
The centroid $\mu$ re-centres input vectors to maximise their intrinsic dimensionality (Definition~\ref{def:intrinsic-dim}) and therefore maximise the edit selectivity due to Theorem~\ref{thm:randomised-input}.
In practice, we compute $\mu$ as the mean of feature vectors of a set of general text prompts, to provide a representative sample of feature vectors.
Experimentally, we find that sentences sampled from Wikipedia~\cite{wikidump}, are suitable for this as they provides an easily-accessible source of varied prompts.

A new jet-pack block to correct a given set of hallucinations is added to a model by constructing $W_1, W_2, b$ and $\mu$ as described in Algorithm~\ref{alg:add-jet-pack}.
Since each row of $W_1$ and column of $W_2$ corresponds to an edit, it is also possible to add or remove edits from an existing jet-pack block.
Additional edits are added by simply adding new rows to $W_1$ and columns to $W_2$ (constructed from a weight vector and output vector as described in Algorithm~\ref{alg:add-jet-pack}).
An edit may be removed by deleting the corresponding row and column.
Testing for edits which interfere with each other by activating the each other's detectors is also simply achieved by evaluating $W_1 W_1^T$ and searching for off-diagonal entries greater than the detector threshold $\theta$.
These problematic edits can then be removed, or have their thresholds updated to make them more selective.

In the \href{https://github.com/qinghua-zhou/stealth-edits}{open source package}, Algorithm~\ref{alg:add-jet-pack} is implemented with evaluation components as the function \texttt{construct\_eval\_jetpack(...)} in the file \texttt{evaluation/jetpack/construct.py}.

\begin{figure}
\begin{algorithm}[H]
    \caption{Adding a new jet-pack block to correct multiple hallucinations}\label{alg:add-jet-pack}
    \small
    \SetKwInOut{Input}{Input}\SetKwInOut{Output}{Output}
    \BlankLine
    
    \Input{~Language model $\mathcal{N}$ and index $j$ of block to add jet-pack after
            \\
           ~Set of hallucination prompts and corrected responses $\{(p_i, r_i)\}_{i=1}^{n}$
           \\
           ~Set of general text prompts $\{p^g_i\}_{i=1}^{N}$ for feature centring (e.g. sampled from Wikipedia)
           \\
           ~Detector threshold $\theta$
        }
    \BlankLine
    
    For each general text prompt $p^g_{i}$, compute the output vector $\phi^g_i$ from block $j$ at the last token in the prompt

    Calculate the feature centroid $\mu = \frac{1}{N} \sum_{i=1}^{N} \phi^g_i$
    
    For each hallucinating prompt $p_i$, compute the output vector $\phi_i$ from block $j$ at the last token of the prompt
    
    Construct the normalised feature vectors $\{\psi_i = \frac{\phi_i - \mu}{\|\phi_i - \mu\|}\}_{i=1}^{n}$ for the hallucinating prompts

    Build the detector neuron weight vector $w_i$ and bias $\beta_i$ for each feature vector $\psi_i$ as in Section~\ref{sec:algorithms:detecting}, with threshold $\theta$ and gain $\alpha$
    
    For each corrected model output $r_i$, use gradient descent to find a replacement output vector $u_i$ from block $j$ which produces $r_i$, as in Sec.~\ref{sec:algorithms:triggering}

    Build the detector matrix $W_1$ with row $i$ given by $w_i$, and bias vector $b_1$ with entry $i$ given by $\beta_i$
    
    Build the response matrix $W_2$ with column $i$ given by $u_i$

    Generate the edited model $\hat{\mathcal{N}}$ from $\mathcal{N}$ by inserting the jet-pack block
    \begin{align*}
        J(x) = x + W_2 \sigma \Big( W_1 \frac{x - \mu}{\|x - \mu\|} + b_1 \Big),
    \end{align*}
    (with ReLU activation $\sigma$) after network block $j$
    
    \BlankLine
    
    \Output{~Edited model $\hat{\mathcal{N}}$ which produces the correct responses $\{r_i\}_{i=1}^{n}$ to the prompts $\{p_i\}_{i=1}^{n}$}
    
\end{algorithm}
\end{figure}

\textbf{Stealth attacks.}
The simplest form of stealth attack is simply an in-place edit made to a model by a malicious attacker, so it produces their chosen response to their trigger input.
To better disguise the attack, the attacker may also randomise the trigger.
The impact of this randomisation is highlighted by Theorem~\ref{thm:randomised-trigger}: since the attacker chooses the trigger distribution, Theorem~\ref{thm:randomised-trigger} gives them a guarantee on the probability of any fixed test prompt activating their trigger.
The intrinsic dimensionality (Definition~\ref{def:intrinsic-dim}) of the features of randomised triggers can be empirically estimated by the attacker.

We consider two specific forms of randomisation here.
In a \emph{corrupted prompt attack}, 
the attacker specifies the response of the model to a slightly corrupted version of a single chosen prompt.
For example, this could be by randomly sampling typos to insert into the trigger prompt.
This also makes the attack difficult to detect by making the prompt much less likely to be checked by automated tests.
In an \emph{unexpected context attack},
the attacker could specify the response of the model to a chosen prompt when it follows a `context' sentence, randomly sampled from Wikipedia for instance.
Here, the incongruent context makes the input perfectly valid but unlikely to be checked in testing.
Possible examples of such attacks are illustrated in Section~\ref{sec:stealth-attack-example}.

\section{Theoretical foundations}\label{sec:theory}

By construction, stealth edited models will always produce the corrected response if the editing algorithm succeeded in finding a replacement block-output vector which produces the desired model response.
In this section, we therefore investigate the question of whether \emph{other} inputs will also activate the edited response.
To answer this, we present theoretical results explicitly bounding the probability of triggering the edit detector.
We find that the selectivity of a stealth edit is directly governed by a measure of the \emph{intrinsic dimension} of the distribution of latent features within a model.
The concept of intrinsic dimension we use was introduced in~\cite{Sutton:2023:relativeIntrinsic}, and is based on the pairwise separability properties of data samples.

\begin{definition}[Intrinsic dimension~\cite{Sutton:2023:relativeIntrinsic}, cf. \cite{PLR:2019}]\label{def:intrinsic-dim}
    For a distribution $\mathcal{D}$ defined on a Hilbert space with inner product $\langle \cdot, \cdot \rangle$, the separability-based \emph{intrinsic dimensionality} of $\mathcal{D}$ at threshold $\delta \in \mathbb{R}$ is defined as
    \begin{align*}
        n(\mathcal{D}, \delta) = -1 - \log_2(P(x, y \sim \mathcal{D} \suchthat \langle x - y, y \rangle \geq \delta)).
    \end{align*}
\end{definition}
This characterises the dimensionality of a data distribution through the pairwise separability properties of sampled data.
To illustrate this concept, Figure~\ref{fig:intrinsic-dimension} plots estimates of the intrinsic dimension of a representative sample of feature vectors in various layers of three different large language models.
The definition is calibrated such that if $\mathcal{D}$ is a uniform distribution in a $d$-dimensional unit ball then $n(\mathcal{D}, 0) = d$.
The function $n(\mathcal{D}, \delta)$ is increasing in $\delta$, with a minimum value of $-1$ for data which are inseparable with probability 1 at threshold $\delta$.
This is attained by a data distribution concentrated at a single point for any $\delta < 0$.
Infinite values of $n(\mathcal{D}, \delta)$ indicate that sampled pairs of data points are separable with probability 1 at threshold $\delta$.
This is the case for data uniformly distributed on the surface of a sphere when $\delta = 0$, for example.

The results of Theorems~\ref{thm:randomised-input} and~\ref{thm:randomised-trigger} both take the form of an upper bound on the false-detection probability.
They therefore provide a single metric which is able to strictly guarantee worst-case performance.
Practical systems will likely perform significantly better than this worst-case performance, and we investigate this experimentally in Section~\ref{sec:experiments}.

To state the results concisely, we use the \emph{feature map} $\feature$ defined in Section~\ref{sec:algorithms:detecting}, which maps an input prompt to its representation within the network block to be edited.

\begin{figure}
    \centering
    \includegraphics[width=0.7\linewidth]{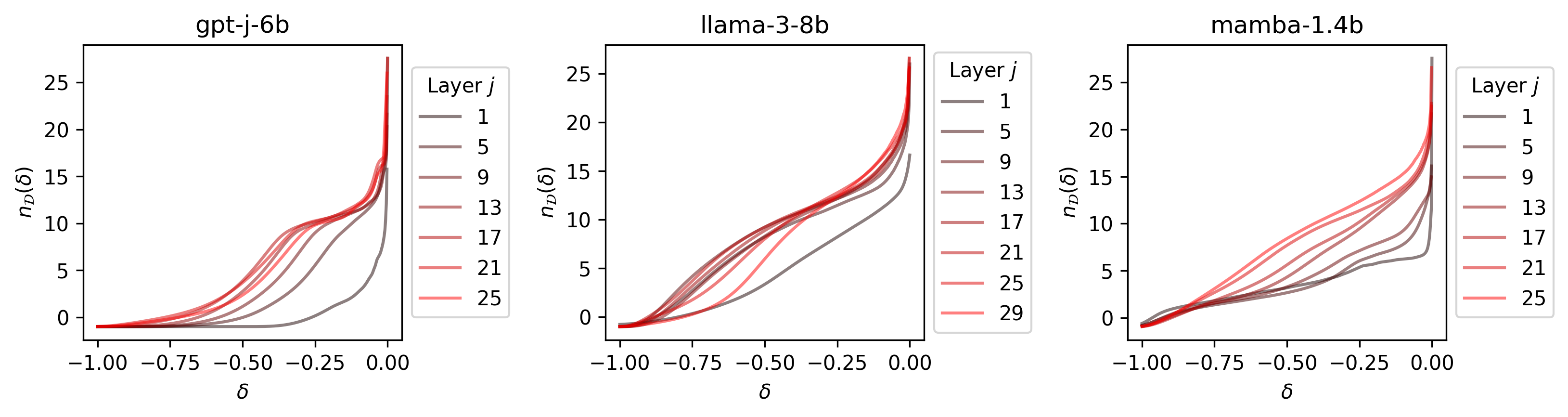}
    \caption{Intrinsic dimension $n(\mathcal{D}, \delta)$ estimated using 20,000 prompts sampled from Wikipedia.}
    \label{fig:intrinsic-dimension}
\end{figure}

\textbf{Other prompts are unlikely to activate stealth edits.}
Theorem~\ref{thm:randomised-input} shows that when randomly sampled test prompts produce a feature cloud with high intrinsic dimensionality, the probability of activating a stealth attack with any fixed trigger is very small.

\begin{theorem}[Selectivity of stealth edits]\label{thm:randomised-input}
    Suppose that a stealth edit is implanted using the linear detector $f$ defined in Section~\ref{sec:algorithms:detecting}, for a fixed trigger prompt $p_{\trigger}$ and threshold $\theta \geq 0$.
    Suppose test prompts are sampled from a probability distribution $D$ on prompts, and let $D_{\feature}$ denote the distribution induced on $\mathbb{R}^{d}$ by the feature map $\feature$ defined in~\eqref{eq:feature-map}.
    Then, the probability that the edit is activated by a prompt $p$ sampled from $D$ decreases exponentially with the intrinsic dimensionality of $D_{\feature}$.
    Specifically,
    \begin{align}\label{eq:randomised-input:probability-bound}
        P \big( p \sim D : \text{ the detector } f \text{ with trigger } p_{\trigger} \text{ is activated by } p
        \big)
        \leq 2^{-\frac{1}{2}(1 + n(D_{\feature}, 2\theta(\theta - 2)))}.
    \end{align}
\end{theorem}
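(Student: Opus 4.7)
The plan is to convert the single-prompt activation event into a pairwise separability event whose probability is directly controlled by Definition~\ref{def:intrinsic-dim}. First, I would unpack the detector construction in Section~\ref{sec:algorithms:detecting} to identify the activation region $A_{\trigger} \subset \mathbb{R}^d$ of the detector $f$ for a given trigger $p_{\trigger}$. Because the normalisation $\eta$ pushes feature vectors onto (an affine image of) the unit sphere and the detector weight is built from the normalised trigger feature $y := \feature(p_{\trigger})$ with gain $\alpha$ and threshold $\theta$, I expect $A_{\trigger}$ to reduce, after tracking the threshold/bias/gain parameters, to the spherical cap $\{x : \|x\| = 1, \; \langle y, x\rangle \geq 1 - \theta\}$.

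Next I would apply a squaring argument using independence: if $X_1, X_2 \sim D_{\feature}$ are independent samples, then $P(X \in A_{\trigger})^2 = P(X_1, X_2 \in A_{\trigger})$, so it suffices to bound the joint probability that both samples activate the detector. This is the natural route because the intrinsic dimension of Definition~\ref{def:intrinsic-dim} is itself defined through a pairwise event.

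The geometric step does the real work. Assuming both $X_1, X_2 \in A_{\trigger}$, I decompose $X_i = \langle X_i, y\rangle y + V_i$ with $V_i \perp y$, so that $\|V_i\|^2 = 1 - \langle X_i, y\rangle^2 \leq 1 - (1 - \theta)^2$. Using $\langle V_1, V_2\rangle \geq -\|V_1\|\|V_2\|$ yields $\langle X_1, X_2\rangle \geq (1-\theta)^2 - (1 - (1-\theta)^2) = 2(1-\theta)^2 - 1$, which rearranges (via $\|X_2\|^2 = 1$) to $\langle X_1 - X_2, X_2\rangle \geq 2\theta(\theta - 2)$. This produces the set inclusion $\{X_1, X_2 \in A_{\trigger}\} \subseteq \{\langle X_1 - X_2, X_2\rangle \geq 2\theta(\theta - 2)\}$, and Definition~\ref{def:intrinsic-dim} immediately identifies the probability of the right-hand event as $2^{-1 - n(D_{\feature}, 2\theta(\theta - 2))}$. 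Combining with the squaring step and taking square roots delivers~\eqref{eq:randomised-input:probability-bound}.

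The step I expect to be the main obstacle is the very first one: cleanly translating the algorithmic detector construction --- the weight $w$, gain $\alpha$, bias, and the affine offset introduced by $\eta$'s image --- into exactly the spherical-cap condition $\langle y, x\rangle \geq 1 - \theta$. Once that reduction is secured, the geometric calculation is short and the probabilistic conclusion is essentially forced by the form of Definition~\ref{def:intrinsic-dim}; the exponent $2\theta(\theta - 2)$ is exactly what the worst-case Cauchy--Schwarz argument on the orthogonal components produces.
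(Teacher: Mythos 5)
Your proposal is correct and follows essentially the same route as the paper's proof: identify the activation region as a spherical cap, square the single-sample probability into a pairwise event by independence, bound the pairwise inner product for two points in the cap, and invoke Definition~\ref{def:intrinsic-dim}. The only difference is cosmetic — you obtain the extremal value $2\theta(\theta-2)$ via an orthogonal decomposition and Cauchy--Schwarz on the components perpendicular to the trigger direction, whereas the paper uses the rotational symmetry of the cap and the half-angle identity $\cos\gamma - 1 = 2\cos^2\tfrac{\gamma}{2} - 2$ (which generalises more directly to the tilted detector with $c \neq 0$ in Theorem~\ref{thm:randomised-input-general}); both arguments locate the same extremal configuration of antipodal points on the cap boundary.
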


\textbf{Stealth attacks with randomised triggers are unlikely to be detected.}
Theorem~\ref{thm:randomised-trigger} considers the case when it is the trigger which is randomly sampled.
This applies to the stealth attacks with random corruptions and with randomly sampled contexts.
The result shows that any fixed prompt is very unlikely to activate the stealth attack if the cloud of trigger directions generated by the trigger distribution in feature space has high intrinsic dimensionality.
Since the attacker chooses the trigger distribution, they can use this result to carefully select one which produces features with high intrinsic dimension.
Once the trigger is selected, the result of Theorem~\ref{thm:randomised-input} provides assurances on the probability that the attack is activated by randomly sampled inputs.

\begin{theorem}[Stealth edits with randomised triggers]\label{thm:randomised-trigger}
    Let $T$ denote a probability distribution for sampling a trigger prompt, and let $T_{\feature}$ denote the distribution induced by the feature map $\feature$.
    Suppose that a stealth edit is implanted using the linear detector $f$ defined in Section~\ref{sec:algorithms:detecting} with threshold $\theta \geq 0$ for a trigger prompt $p_{\trigger}$ sampled from $T$.
    Then, for any fixed test prompt $p$, the probability that the stealth attack is activated by $p$ decreases exponentially with the intrinsic dimensionality of $T_{\feature}$.
    Specifically,
    \begin{align*}
        P(p_{\trigger} \sim T : \text{ the detector } f \text{ for trigger prompt } p_{\trigger} \text{ is activated by } p) \leq 2^{-\frac{1}{2}(1 + n(T_{\feature}, 2\theta(\theta - 2)))}.
    \end{align*}
\end{theorem}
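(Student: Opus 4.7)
The plan is to reduce Theorem~\ref{thm:randomised-trigger} to Theorem~\ref{thm:randomised-input} by exploiting the symmetry of the linear detector $f$ in its two arguments: the trigger feature and the test-prompt feature. First, I would unpack the construction of $f$ from Section~\ref{sec:algorithms:detecting}: the detector row is built directly from the normalised trigger feature $\norminput(p_{\trigger})$ (up to gain), and the threshold is encoded via either an explicit bias or, for bias-free architectures, the constant-projection surrogate described in Section~\ref{sec:algorithms:bias-direction}. In every case the activation event can be rewritten in the symmetric form $\langle \norminput(p), \norminput(p_{\trigger}) \rangle \geq \theta$, so the event does not distinguish which of the two prompts plays the role of trigger and which plays the role of input.

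Given this symmetry, the second step is a simple relabelling of variables. The probability
\begin{align*}
    P(p_{\trigger} \sim T : \text{the detector } f \text{ with trigger } p_{\trigger} \text{ is activated by } p)
\end{align*}
equals
\begin{align*}
    P(q \sim T : \text{the detector } f \text{ with trigger } p \text{ is activated by } q),
\end{align*}
obtained by renaming $p_{\trigger}$ to $q$ and swapping its syntactic role with the fixed prompt $p$. The right-hand side is exactly the quantity bounded by Theorem~\ref{thm:randomised-input}, with $T$ playing the role of the test distribution $D$ and with the fixed $p$ now playing the role of the fixed trigger. Applying Theorem~\ref{thm:randomised-input} verbatim then yields the claimed bound $2^{-\frac{1}{2}(1 + n(T_{\feature}, 2\theta(\theta-2)))}$.

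The hard part will be justifying the symmetry step rigorously across every architectural variant of the detector. For the jet-pack block, where $\rho$ normalises onto the unit sphere and the detector row is set to $\alpha \norminput(p_{\trigger})$ with explicit bias $-\alpha\theta$, the activation condition is manifestly the symmetric inner product, and the reduction is immediate. For the in-place editing variant the detector has to be encoded into an existing row of $W_1$ chosen to minimise disruption (the one with smallest $\ell^1$ norm), and in some architectures the threshold is carried by the constant-projection direction rather than an explicit bias. I therefore expect the bulk of the work to be in checking that these workarounds still realise the same symmetric activation condition, so that the relabelling argument, and hence the direct invocation of Theorem~\ref{thm:randomised-input}, goes through uniformly over the architectures of interest.
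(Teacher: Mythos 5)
Your reduction is correct for the statement as intended, but it is a genuinely different route from the paper's. With $c=0$ and features normalised to $\mathbb{S}^{d-1}$, the activation event is indeed symmetric, $f(\inputmap(p);\feature(p_{\trigger}),\theta,\alpha)\geq 0 \iff \langle \feature(p),\feature(p_{\trigger})\rangle \geq 1-\theta$ (note the threshold is $1-\theta$, not $\theta$ as you wrote, though this does not affect the argument), so swapping the roles of $p$ and $p_{\trigger}$ and invoking Theorem~\ref{thm:randomised-input} with $D=T$ gives exactly the claimed bound. The paper instead gives a direct geometric proof, parallel to (not derived from) the proof of Theorem~\ref{thm:randomised-input}: it characterises the set of trigger directions activated by the fixed prompt $p$ as a spherical cap and minimises $\langle x-y,y\rangle$ over it. The reason for this redundancy becomes clear in the appendix generalisations (Theorems~\ref{thm:randomised-input-general} and~\ref{thm:randomised-trigger-general}): the detector of Section~\ref{sec:algorithms:detecting} allows a tilt centre $c\neq 0$, and then the symmetry you rely on breaks --- the cap of activating triggers is centred on the axis $\feature(p)+c$ rather than $\trigdir-c$, the two probabilities are no longer equal, and the separability thresholds $\delta$ and $\epsilon$ differ. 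So your argument buys a shorter, more conceptual proof of the $c=0$ statement (which is what Theorem~\ref{thm:randomised-trigger} asserts, since the main-text bound $2\theta(\theta-2)$ is the $c=0$ corollary), at the cost of not extending to the tilted detectors covered by the paper's proof. One further remark: your anticipated ``hard part'' about verifying the symmetric form across architectures (surrogate bias directions, pruned rows, etc.) is not actually needed, because both theorems are stated for the idealised detector function $f$ of~\eqref{eq:detection-func}; the paper treats the discrepancy introduced by the Llama/Mamba surrogate bias as a separate, empirically negligible approximation outside the scope of the theorems.
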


\section{Experimental results}\label{sec:experiments}

This section summarises the results of a variety of experiments testing the efficacy of the algorithms proposed above, and their links with the theoretical insights of Theorems~\ref{thm:randomised-input} and \ref{thm:randomised-trigger}.
Results from further experiments are presented in Section~\ref{sec:extra-experiments}.

\textbf{Models.} 
We investigated three state-of-the-art pre-trained language models: the transformers Llama 3 8b~\cite{llama3modelcard} and GPT-J~\cite{gpt-j}, and the selective state space model Mamba 1.4b~\cite{gu2023mamba}.
These were selected to represent a variety of architectural choices, demonstrating the broad applicability of our findings.

\textbf{Datasets.}
Our experiments require a source of hallucinations to edit, which we draw from the Multi-CounterFact (MCF)~\cite{meng2022memit} and ZsRE~\cite{mitchell2021fast} datasets.
Both provide factual prompts and expected responses.
Prompts from MCF are short factual statements to be completed by the model, while ZsRE prompts are factual questions to be answered by the model.
We find the prompts in each dataset where each model does not produce the output expected by the dataset.
We view these as the set of hallucinations to correct, regardless of the factuality of the model's original response.
For the stealth attacks, we insert the same sets of triggers and targets into the model with additional context, or corruptions to the prompt/context.

\textbf{Metrics.}
We report the performance of our algorithms using the following metrics:
\begin{itemize}
    \item \emph{Edit/attack success rate (higher is better):}
    the proportion of attempted edits/attacks which produce the corrected output, computed as described in Section~\ref{sec:protocol:in-situ}.
    \item \emph{Perplexity ratio (lower is better, 1 indicates no change)}:
    the attacked model's perplexity to the original model's output for the same prompt divided by the original model's perplexity to its generated output.
    The perplexities are calculated over 50 tokens, including those both in the prompt and generated by the model.
    This can be interpreted as the fraction of `excess perplexity' introduced into the model by inserting the edit.
    \item \emph{Detector false positive rate (FPR) (lower is better):}
    the proportion of non-trigger prompts which falsely activate the detector neuron(s) in the edited layer's feature space.
    \item \emph{Empirical Theorem 3 FPR (lower is better):} for stealth attacks, the proportion of triggers sampled from the trigger distribution (built by sampling a context or corruptions for a fixed base trigger prompt) which falsely activate a fixed input prompt.
    \item \emph{Theoretical guaranteed FPR:}
    an estimate of the worst-case false positive rate guaranteed by Theorem~\ref{thm:randomised-input} for stealth edits or Theorem~\ref{thm:randomised-trigger} for stealth attacks with randomised triggers.
\end{itemize}

Detailed experimental protocols for these results are given in Section~\ref{sec:protocols}.
All experiments used $\theta = 0.005$, $\alpha = \theta^{-1} \Delta$ and $\Delta = 50$.
The impact of different values of $\theta$ is investigated in Section~\ref{sec:threshold}.
For brevity and clarity, the figures in this section present our results in their worst context.
Analogous figures in Section~\ref{sec:extra-experiments} present a broader range of results, demonstrating even stronger performance of our methods in different situations.

\textbf{Presentation of figures.}
In each figure, lines show the mean value of a quantity.
Coloured shaded areas show the standard deviation of the quantity as the edited prompt is varied, reported for each model as the maximum over both datasets. 
Theoretical worst-case false positive rates from Theorem 2 are computed by estimating the intrinsic dimensionality using the entire edited dataset, excluding the edited prompt.
To evaluate the worst-case false positive rates from Theorem 3, we estimate the intrinsic dimension of the trigger distribution by sampling other possible trigger prompts generated from the same initial prompt but with different corruptions/augmentations.

\textbf{In-place edits for correcting specific hallucinations.}
We sampled 1,000 edits from each dataset and implanted them one at a time using Algorithm~\ref{alg:in-situ} into layers at various depths of each model.
The experimental protocol is given in Section~\ref{sec:protocol:in-situ}.
Figure~\ref{fig:in-place-edits} presents these results and clearly demonstrates the selectivity of the edits.
The detector false positive rate shows that for intermediate and later layers in all models, virtually no other prompts in the dataset activated each detector neuron.
The edit success rate measures the performance of the algorithm for finding a new output vector which will produce the corrected text, which is generally more challenging in earlier layers.
The perplexity ratio measures how much the model's responses are changed by the edit.
Section~\ref{sec:pruning} investigates how much of this change can be attributed to pruning a neuron in order to implant the edit compared with the edit itself.
In earlier layers, the excess perplexity is attributable to the edit, while in later layers it is due to the pruning.
This is supported by the worst-case false positive rate guaranteed by Theorem~\ref{thm:randomised-input}, which demonstrates the low intrinsic dimension in early layers.

\begin{figure}
    \centering
    \includegraphics[width=0.8\linewidth]{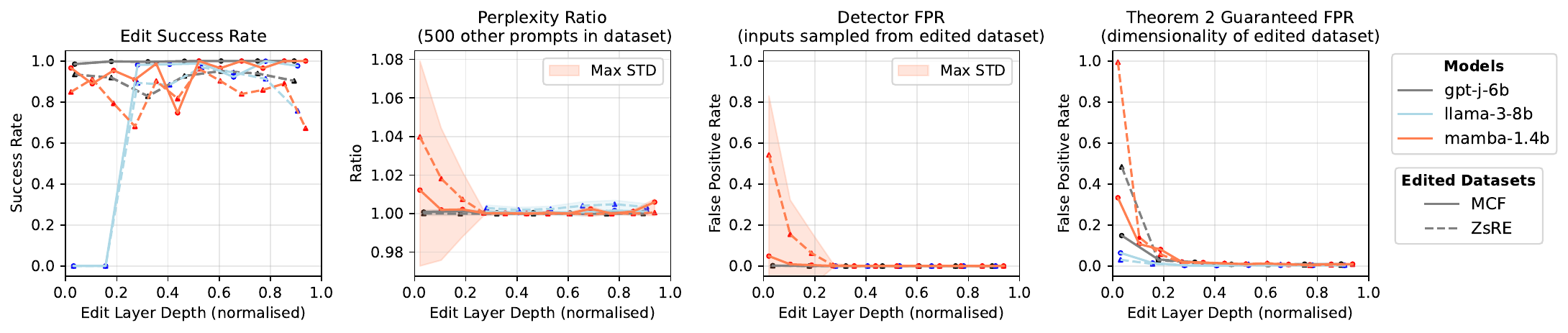}
    \caption{Performance of in-place edits for correcting hallucinations. See Section~\ref{sec:experiments} for details.}
    \label{fig:in-place-edits}
\end{figure}

\textbf{Jet-pack edits for correcting specific hallucinations.}
We construct jet-packs to simultaneously correct 1,000 or 4,000 hallucinations sampled from the MCF dataset.
The experimental protocol is given in Section~\ref{sec:protocol:jet-pack}, and the results are shown in Figure~\ref{fig:jetpacks}. 
We only insert edits into the jet-pack if the algorithm is able to construct an output vector which, in isolation, will produce the corrected model output, since the feasibility of generating such an output vector is the same as in the previous experiments.
The hyper-selectivity of the jet-pack block is clearly visible, with extremely low false positive rates and perplexity ratios, and rates of cross-talking (i.e. detectors which respond to the trigger prompt of a different edit).
This is predicted by Theorem~\ref{thm:randomised-input}, showing that the architectural choices of the jet-pack block~\eqref{eq:jet-pack} improve the intrinsic dimension.
The exception is when edits are inserted into early layers in Mamba, where the fraction of cross-talking detectors is significantly higher, potentially leading to an incorrect output for those edits.
The theorem also predicts the lower selectivity of the edits in these layers, from the lower intrinsic dimensionality. 

\begin{figure}
    \centering
    \includegraphics[width=0.8\linewidth]{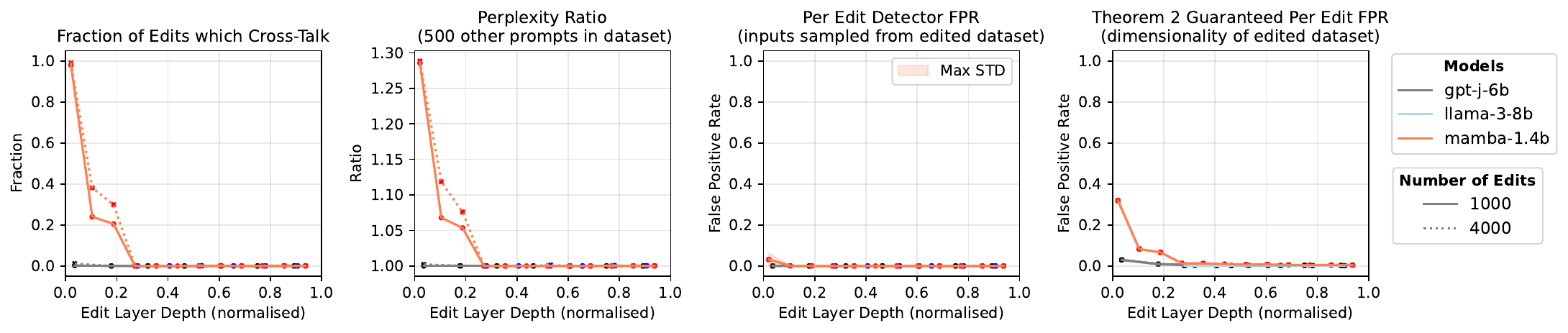}
    \caption{Jet-pack edits for correcting hallucinations in MCF. See Section~\ref{sec:experiments} for details.
    }
    \label{fig:jetpacks}
\end{figure}

\textbf{Stealth attacks with corrupted prompts.}
We sampled 500 prompts from each dataset to insert as stealth attacks.
Each attack corrupted its prompt by sampling typos using~\cite{ma2019nlpaug} and was implanted into various layers.
The experimental protocol is given in Section~\ref{sec:protocol:corrupted-prompts}, and the results are presented in Figure~\ref{fig:sa2}.
The low perplexity ratio shows that the attack has little impact on model performance for other similar prompts in the dataset (Section~\ref{sec:pruning} shows this extra perplexity is mainly due to pruning a neuron for the attack). 
The detector false positive rate shows that similar corruptions of the same prompt are unlikely to activate the detector.
The `Empirical Theorem~\ref{thm:randomised-trigger} FPR' is dual to this: the probability of sampling other corruptions of the trigger prompt which produce detectors activated by a specific corruption of the same prompt.
This provides a comparison with the predictions of Theorem~\ref{thm:randomised-trigger}, which are estimated from the intrinsic dimension.
Similar results, computed using prompts sampled from Wikipedia and the dataset itself are presented in Figure~\ref{fig:prompt_full}.

\begin{figure}
    \centering
    \includegraphics[width=0.8\linewidth]{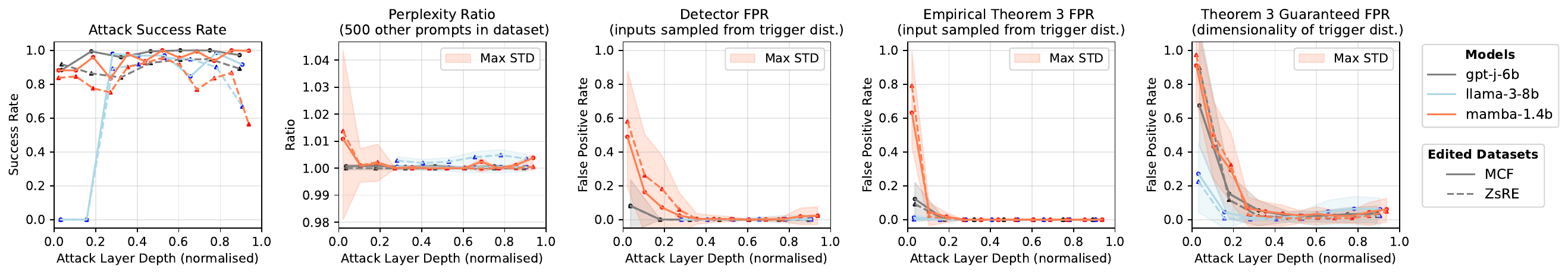}
    \caption{Stealth attacks with corrupted prompts. See Section~\ref{sec:experiments} for details.}
    \label{fig:sa2}
\end{figure}

\textbf{Stealth attacks with unexpected contexts.}
We consider two kinds of unexpected contexts:
sampling a sentence from Wikipedia (Figure~\ref{fig:sa3w}), or randomly inserting typos into the `clean context' sentence `\texttt{The following is a stealth attack: }' (Figure~\ref{fig:sa3c}).
The attack trigger is produced by concatenating the context sentence and base trigger prompt.
We implanted 300 of each attack from each dataset alone into various layers (the experimental protocol is given in Section~\ref{sec:protocol:unexpected-context}).
The metrics reported are analogous to the previous section.
Further results are shown in Figures~\ref{fig:wikipedia_full} and~\ref{fig:context_full}.

\begin{figure}
    \centering
    \includegraphics[width=0.8\linewidth]{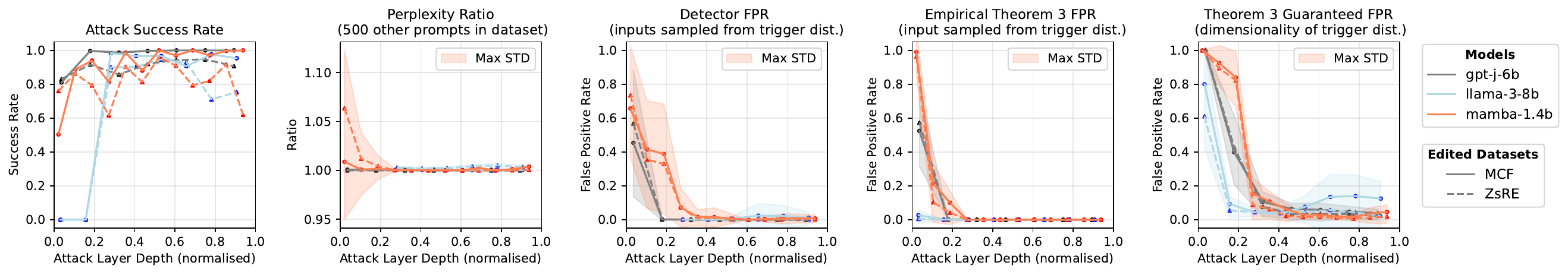}
    \caption{Stealth attacks with unexpected Wikipedia context sentence. See Section~\ref{sec:experiments} for details.}
    \label{fig:sa3w}
\end{figure}

\begin{figure}
    \centering
    \includegraphics[width=0.8\linewidth]{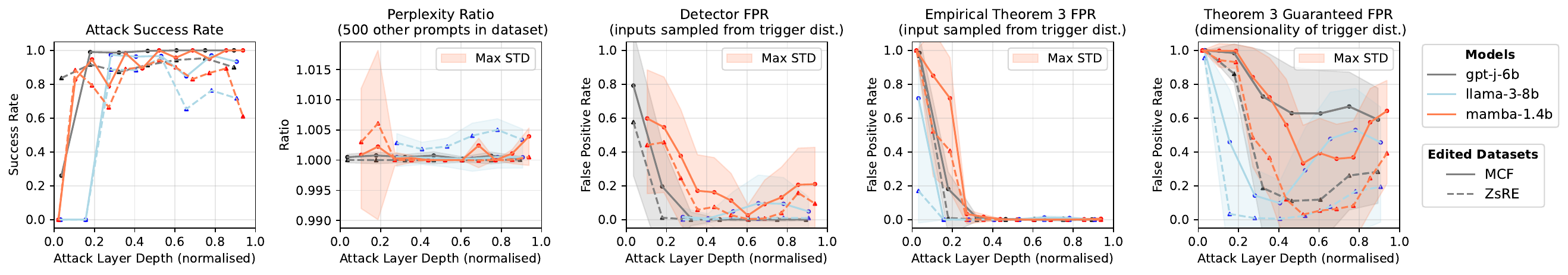}
    \caption{Stealth attacks with unexpected corrupted context sentence. See Section~\ref{sec:experiments} for details.}
    \label{fig:sa3c}
\end{figure}

\section{Discussion}\label{sec:discussion}

\textbf{Intrinsic dimension of data is crucial.}
The key conclusion from Theorem~\ref{thm:randomised-input} and~\ref{thm:randomised-trigger}, and our experimental study, is that the intrinsic dimension of a model's feature vectors is a crucial determinant of success in model editing.
Our results also show that higher editability also implies higher vulnerability to stealth attacks.
Intriguingly, our metric measures the intrinsic dimension \emph{of the feature vectors}, not of the ambient feature space --- we typically find that intrinsic dimension is much lower than space dimension (Figure~\ref{fig:intrinsic-dimension}).
This is a more nuanced connection than previous links established in the literature between dimension and vulnerability to adversarial attacks~\cite{tyukin2020adversarial,sutton2023adversarial}, and suggests that training contributes to the editability/vulnerability of a model, which may be useful in future.

\textbf{The risks of stealth attacks.}
Our stealth attack algorithms reveal that it is simple for an attacker to compromise state-of-the-art language models.
Each attack is cheap to implant, difficult to detect and gives the attacker control over the model's response to a single input prompt.
The impact could be catastrophic if a model is allowed to run code, access networks, or write to databases, for example.
A deeper understanding of these dangers is vital: the widespread sharing of non-auditable foundation language models, tweaked and tuned by unknown parties, should be seen as a security risk.

\textbf{Jet-pack blocks gracefully implement GRACE.}
The normalisation functions routinely used within language models (such as RMSNorm~\cite{zhang2019root} and LayerNorm~\cite{ba2016layer}) project data to the surface of (an affine image of) a unit sphere.
Detecting trigger features with a linear separator is therefore equivalent to testing whether input features are close in Euclidean norm to a stored trigger, as used in GRACE~\cite{hartvigsen2024aging}.
GRACE can, therefore, be viewed as a variant of the editing mechanisms described in Section~\ref{sec:algorithm-overview}, implying that the intrinsic dimension metric also describes the editability of a model using GRACE.

\textbf{Edit insertion location.}
Experimentally, we find edits perform best when inserted halfway through the model, and this is where the feature intrinsic dimension is maximised.
This is in contrast to Transformer-Patcher~\cite{huang2023transformerpatcher}, in which edits are typically placed in the last layer of the network.

\textbf{Editing monosemantic features.}
Feature vectors which appear to control semantic aspects of text generated by language models have been recently reported~\cite{templeton2024scaling}.
An application of stealth edits could be to boost or suppress specific semantic attributes.
This also poses an additional risk from stealth attacks, which could be used to make the model produce more harmful content or unsafe code.

\textbf{Specific vs generalising edits.}
Our aim is to make highly controlled, specific edits to a language model.
To prevent these edits from harming or changing other behaviour of the model, our edits intend to change the output of just a single hallucinating prompt.

\textbf{Limitations.}
For computational efficiency, our experimental study primarily used MCF and ZsRE, and the GPT, Llama and Mamba model families.
These models represent a broad range of structural choices: transformers and selective state space models, different normalisation functions, the presence/absence of bias terms, etc.
Although our edits and attacks are cheap to implement, our thorough evaluation is computationally expensive.
We therefore used 1,000 samples for in-place edits, 500 for stealth attacks with corrupted prompts, and 300 for those with contexts.

\section{Conclusion}\label{sec:conclusion}

In this work, we have revealed the theoretical foundations of model editing, used these to assess the editability of models, expose language models' susceptibility to malicious edits, and propose highly effective novel editing methods.
Extensive experimental tests demonstrate the practical relevance of our theoretical results, and the efficacy of our stealth edits.
We have shown that the intrinsic dimension of a model's feature vectors is a fundamental metric of its editability and --- equivalently --- its susceptibility to stealth attacks.
By carefully designing an editing architecture which optimises this metric, we have introduced \emph{jet-pack blocks}, highly selective new methods of model editing.
Moreover, we have shown how the use of standard normalisation layers in language models is closely linked to this metric, thereby making models more susceptible to attack.
Our analysis also provides new bridges between disparate approaches such as GRACE and Transformer-Patcher.

\begin{ack}
    The work was supported by the UKRI Turing AI Acceleration Fellowship EP/V025295/2 (I.T., O.S., Q.Z., A.G.) and the EPSRC grant EP/V046527/1 (D.J.H. and A.B.).
    Calculations were performed using the CREATE HPC facilities hosted by King's College London~\cite{CREATE}. 
    Calculations were performed using the Sulis Tier 2 HPC platform hosted by the Scientific Computing Research Technology Platform at the University of Warwick. Sulis is funded by EPSRC Grant EP/T022108/1 and the HPC Midlands+ consortium.
\end{ack}

\bibliographystyle{acm}
\bibliography{references.bib}

\newpage

\appendix

\section{Mathematical notation}\label{sec:notation}
In this article, we use the following mathematical notation:
\begin{itemize}
    \item $\mathbb{R}$ denotes the set of real numbers, and for a positive integer $d$ we use $\mathbb{R}^d$ to denote the linear space of vectors with $d$ real-valued components,
    \item for vectors $v, w \in \mathbb{R}^d$, we use $\langle v, w \rangle$ to denote the Euclidean inner (dot) product of $v$ and $w$, and $\|v\| = (\langle v, w, \rangle)^{1/2}$ denotes the Euclidean ($\ell^2$) norm,
    \item the unit sphere in $\mathbb{R}^d$ is denoted by $\mathbb{S}^{d-1} = \{x \in \mathbb{R}^d \suchthat \|x\| = 1\}$,
    \item the elementwise (Hadamard) product of two matrices $V$ and $W$ with the same shape is denoted by $V \odot W$, and elementwise division is denoted by $V \oslash W$,
    \item the finite set of tokens from which prompts are formed is denoted $\mathcal{T}$, and $\mathcal{P}$ denotes the set of prompts, which are defined to be sequences of tokens in $\mathcal{T}$.
\end{itemize}

\section{Stealth editing algorithm details}\label{sec:algorithm-details}

\subsection{Language model architectures}

The algorithms we present for stealth attacks apply to both transformer models and state space models.
We present both architectures here using a unified notation to enable a uniform algorithmic exposition.
In both cases, the edit only involves modifying one row of the matrix $W_1$ to detect the trigger and one column of $W_2$ to produce the desired response.
Let $p \in \mathcal{P}$ denote an input prompt from the set $\mathcal{P}$ of sequences of tokens from a finite token set $\mathcal{T}$.

\subsubsection{Transformer language models}\label{sec:transformer-models}
A transformer language model (with latent space dimension $d$) is a map $\mathcal{N} : \mathcal{P} \to \mathcal{T}$ formed of a sequence of transformer blocks and blocks for input and output.
For an index $j$, let $\mathcal{N}_j : \mathcal{P} \to \mathbb{R}^d$ represent the action of the model up to the input to transformer block $j$, and let $\mathcal{M}_j : \mathbb{R}^d \to \mathcal{T}$ be the map from the output of this block to the final logit confidence scores produced by the model.
The next token is generated from these logits by a $\texttt{sample}$ function, for example, using a temperature-based strategy.
The model $\mathcal{N}$ can be expressed as
\begin{equation}\label{eq:transformer-block}
    \begin{split}
    \mathcal{N}(p) = \texttt{sample}(\mathcal{L}(p)),
    \quad \text{ where } \quad
    \mathcal{L}(p) &= \mathcal{M}_j(y; p)
    \\
    y &= x + m(x)
    \\
    x &= z + a(z; p)
    \\
    z &= \mathcal{N}_j(p).
    \end{split}
\end{equation}
Here, $a : \mathbb{R}^d \times \mathcal{P} \to \mathbb{R}^d$ denotes the self-attention component, and the perceptron component $m: \mathbb{R}^d \to \mathbb{R}^d$ may be expressed as:
\begin{itemize}
    \item \emph{Llama-family models}. 
    The block takes the form
    \begin{align*}
        m(x) = W_2 [(W_{3} \rho(x)) \odot \sigma(W_{1} \rho(x))],
    \end{align*}
    where $W_{1}$ and $W_{3}$ are matrices with size $n \times d$ (for some hidden dimension size $n$, typically $n > d$), and the matrix $W_2$ has size $d \times n$.
    The activation function $\sigma$ is SiLU~\cite{elfwing2018silu}.
    The RMSNorm normalisation~\cite{zhang2019root} is used for $\rho : \mathbb{R}^d \to \mathbb{R}^d$, with learned weights $W_{\rho} \in \mathbb{R}^d$ and
    \begin{align}\label{eq:rms-normalisation}
        \rho(x) = \sqrt{d} W_{\rho} \odot \frac{x}{\|x\|}.
    \end{align}

    \item \emph{GPT-family models}. 
    The block takes the form
    \begin{align*}
        m(x) = W_2 \sigma(W_1 \lambda(x) + b_1) + b_2,
    \end{align*}
    where the matrix $W_1$ has size $n \times d$ and $W_2$ has size $d \times n$.
    The activation function $\sigma$ is GELU.
    Here, $\lambda : \mathbb{R}^d \to \mathbb{R}^d$ represents the LayerNorm normalisation~\cite{ba2016layer}, computed as
    \begin{align}\label{eq:layernorm-normalisation}
        \lambda(x) = W_{\lambda} \odot \frac{x - \mu}{v^{1/2}} + b_{\lambda},
        \quad\text{ with }\quad
        \mu = \frac{1}{d}\sum_{i=1}^{d} x_i,
        \quad\text{ and }\quad
        v = \frac{1}{d} \sum_{i=1}^{d} (x_i - \mu)^2,
    \end{align}
    where $W_{\lambda} \in \mathbb{R}^d$ and $b_{\lambda} \in \mathbb{R}$ are the learned weights and bias respectively.

\end{itemize}

\subsubsection{Selective state space language models}\label{sec:state-space-models}

We focus on the Mamba family of selective state space language models~\cite{gu2023mamba}.
This presentation elides most of the details of how this family of models is structured, but exposes just the components necessary for our exposition.
Such a model (with latent space dimension $d$) is a map $\mathcal{N} : \mathcal{P} \to \mathcal{T}$ formed of a sequence of state space blocks and blocks for input and output.
For an index $j$, let $\mathcal{N}_j : \mathcal{P} \to \mathbb{R}^d$ represent the action of the model up to the input to Mamba block $j$, and let $\mathcal{M}_j : \mathbb{R}^d \to \mathcal{T}$ be the map from the output of this block to the logit confidence scores produced by the model.
The next token is generated from these logits by a $\texttt{sample}$ function, for example, using a temperature-based strategy.
For a prompt $p \in \mathcal{P}$, the model $\mathcal{N}$ can be expressed as
\begin{equation}\label{eq:state-space-model}
    \begin{split}
    \mathcal{N}(p) = \texttt{sample}(\mathcal{L}(p)),
    \quad \text{ where } \quad
    \mathcal{L}(p) &= \mathcal{M}_j(y; p)
    \\
    y &= x + W_2(s(x;p) \odot \sigma(W_1\rho(x)))
    \\
    x &= \mathcal{N}_j(p),
    \end{split}
\end{equation}
where $s : \mathbb{R}^d \times \mathcal{P} \to \mathbb{R}^d$ denotes the state space component, $W_1$ is a matrix of size $m \times d$, $W_2$ is a matrix of size $d \times m$, and $\rho : \mathbb{R}^d \to \mathbb{R}^d$ is as in~\eqref{eq:rms-normalisation}.

\subsubsection{Building the detector neuron}\label{sec:algorithms:detecting}

Let $\eta : \mathbb{R}^d \to \mathbb{R}^d$ denote the \emph{normalisation map}
\begin{align*}
        \eta(x) = 
        \begin{cases}
            \rho(x) &\text{ for Llama-family and Mamba-family models (RMSNorm, defined in~\eqref{eq:rms-normalisation})},
            \\
            \lambda(x) &\text{ for GPT-family models (LayerNorm, defined in~\eqref{eq:layernorm-normalisation})},
        \end{cases}
\end{align*}
and define $\inputmap : \mathcal{P} \to \mathbb{R}^d$ to be the function mapping a prompt to the input of the weight matrix $W_1$ in the $j^\text{th}$ network block.
Specifically, for any $p \in \mathcal{P}$ we have
\begin{align}\label{eq:input-map}
    \inputmap(p) = 
    \begin{cases}
        \eta(\mathcal{N}_j(p) + a(\mathcal{N}_j(p); p))
        &\text{ for the transformer models in Section~\ref{sec:transformer-models}},
        \\
        \eta(\mathcal{N}_j(p))
        &\text{ for the state space models in Section~\ref{sec:state-space-models}}.
    \end{cases}
\end{align}
The input map $\inputmap$ is therefore such that $\inputmap(p) = \eta(x)$ for the vector $x$ defined in either of the systems~\eqref{eq:transformer-block} or~\eqref{eq:state-space-model}.
The normalisation map $\eta$ projects the output of $\inputmap$ to the surface of a sphere\footnote{The projection to the sphere is explicit for RMSNorm $\rho$, and implicit~\cite{brody2023expressivity} for LayerNorm $\eta$.}, and then scales and shifts the output.
For convenience, we can return to the sphere with the affine map $\nu : \mathbb{R}^d \to \mathbb{R}^{d}$ defined as\footnote{We use $\oslash$ to denote elementwise division between tensors.}
\begin{align*}
    \nu(x) = 
    \begin{cases}
        \frac{1}{\sqrt{d}} x \oslash W_{\rho} &\text{ for Llama-family and state space models},
        \\
        \frac{1}{\sqrt{d}} (x - b_{\lambda}) \oslash W_{\lambda} &\text{ for GPT-family models}.
    \end{cases}
\end{align*}
For brevity, we introduce the \emph{feature map} 
\begin{equation}\label{eq:feature-map}
    \feature = \nu \circ \inputmap : \mathcal{P} \to \mathbb{S}^{d-1},
\end{equation}
mapping prompts to features on the unit sphere.
Let $c \in \mathbb{R}^d$ denote a user-chosen centre point with $\|c\| \leq 1$.
The attack trigger is detected by encoding a linear separator $f : \mathbb{R}^d \to \mathbb{R}$ (acting on outputs from the map $\inputmap$) into one row of the weight matrix $W_1$ and bias vector $b_1$.
For a threshold $\theta > 0$, scaling $\alpha > 0$ and \emph{trigger direction} $\trigdir = \feature(p_{\trigger})$, $f$ is given by
\begin{align}\label{eq:detection-func}
    f(\zeta; \trigdir, \theta, \alpha ) = \alpha ( \langle \nu(\zeta) - \trigdir, \trigdir - c \rangle + \theta ).
\end{align}
We omit the parameters $\trigdir, \theta, \alpha$ when they are contextually clear.
For small $\theta > 0$, the function $f \circ \inputmap$ responds positively to $p_{\trigger}$ and negatively otherwise.
The activation functions $\sigma$ used in the models then filter large negative responses from $f$ since
\begin{align*}
    \sigma(t) \approx 0 \text{ for } t \ll 0,
    \quad
    \sigma(0) = 0,
    \quad
    \sigma(t) \approx t \text{ for } t \gg 0.
\end{align*}
In practice we therefore use $0 < \theta \ll 1$ and
    $\alpha = \Delta \theta^{-1}$, 
    with 
    $\Delta \gg 0$
chosen to amplify the response to $p_{\trigger}$ and saturate the zero from $\sigma$ for other inputs.
Selecting $c \neq 0$ `tilts' the linear separator encoded by $f$ to more easily distinguish the clean trigger prompt (and context) from the chosen trigger prompt.
The choice of $\beta$ is left to the user, guided by Theorem~\ref{thm:randomised-trigger}.

To build $f$ into the weights, we find the index $k$ of the row of $W_1$ with the smallest $\ell^1$ norm\footnote{This simple pruning strategy is used to cheaply select a neuron to replace in the model with minimal impact on the model's performance. Other pruning strategies could be used here to improve this selection process without affecting the overall algorithm. Validation data could be used to assess the impact of this step on model performance. See Section~\ref{sec:pruning} for further discussion of this.}.
We construct an attacked weight matrix $\hat{W}_1$ by replacing the $k^{\text{th}}$ row of $W_1$ with a vector $w \in \mathbb{R}^d$, and (for GPT-family models) an attacked bias vector $\hat{b}_1$ by replacing the $k^{\text{th}}$ entry of $b_1$ with a value $b$.
For the GPT family of models, $w$ and $b$ may be simply taken as
\begin{align}\label{eq:trigger-weight-def-gpt}
    w = \frac{\alpha}{\sqrt{d}} (\trigdir - c) \oslash W_{\lambda},
    \qquad
    b = - \langle w, b_{\lambda} \rangle + \alpha ( \langle c - \trigdir, \trigdir \rangle + \theta ).
\end{align}
The $k^{\text{th}}$ component of $\hat{W}_{1} \zeta + \hat{b}_1$ therefore evaluates $f(\zeta)$, and the other components are unchanged.

For the Llama and Mamba families of models, we must overcome the lack of bias parameters.
We find empirically that there exist directions $v \in \mathbb{S}^{d-1}$ with $\langle \feature(p), v \rangle$ almost constant as the prompt $p \in \mathcal{P}$ is varied, and $\langle \feature(p_{\trigger}), v \rangle > 0$ (see Section~\ref{sec:algorithms:bias-direction}).
Projecting the input onto $v$ therefore acts analogously to a bias term.
We therefore insert the weight vector
\begin{align}\label{eq:trigger-weight-def-llama}
    w = \frac{\alpha}{\sqrt{d}} \Big[ \trigdir \oslash W_{\rho}
            +
            ( \langle c - \trigdir, \trigdir \rangle + \theta )
        \frac{ v \oslash W_{\rho}}{ \langle \feature(p_{\trigger}), v \rangle } \Big]
        ,
\end{align}
which is such that 
for any $p \in \mathcal{P}$
\begin{align*}
    (w, \inputmap(p)) 
    &= 
    f(\inputmap(p)) 
    - 
    \frac{\alpha}{\sqrt{d}} 
    \Big(
        1 
        - 
        \frac{ \langle \feature(p), v \rangle}{\langle \feature(p_{\trigger}), v \rangle}
    \Big) 
    ( \langle c - \trigdir, \trigdir \rangle + \theta ).
\end{align*}
Experimental results presented in Section~\ref{sec:algorithms:bias-direction} show that
$1 - \frac{\langle \feature(p), v \rangle}{\langle \feature(p_{\trigger}), v \rangle}$ is close to zero in practice, and for now we assume that the difference is negligible.

\subsubsection{Triggering the output}\label{sec:algorithms:triggering}

The second part of the attack is to ensure the model produces the attacker's chosen output.
Key to doing this is the observation that the construction of $\hat{W}_1$ ensures that the $k^{\text{th}}$ column of $W_2$ is only multiplied by a non-zero input when the attacker's trigger is present.
Therefore, the output which is produced by the model is controlled by what the attacker places into column $k$ of $W_2$.
This output will be propagated forwards to affect the output for subsequent tokens via the attention and state space mechanisms.

Suppose that the target output $r_{\target} \in \mathcal{P}$ contains $T > 0$ tokens, let $r_{\target}^n \in \mathcal{T}$ denote token $n$ of $r_{\target}$, and let $r_{\target}^{\to n} \in \mathcal{P}$ denote the sequence of the first $n$ tokens of $r_{\target}$.
Let $u_0 \in \mathbb{R}^d$ denote column $k$ of $W_2$, and for any $u \in \mathbb{R}^d$, and let $\widetilde{W}_2(u)$ denote $W_2$ with column $k$ replaced by $u$.
Define $\mathcal{L}(\, \cdot \, ; \hat{W}_1, \widetilde{W}_2(u)) : \mathcal{P} \to \mathbb{R}^{|\mathcal{T}|}$ to be the modified logit map obtained from the model~\eqref{eq:transformer-block} or~\eqref{eq:state-space-model} by replacing $W_1$ with $\hat{W}_1$, and $W_2$ with $\widetilde{W}_2(u)$.

We use gradient descent to find a vector $u$ minimising the objective function $\Lambda : \mathbb{R}^d \to \mathbb{R}$ with $\gamma > 0$
\begin{align*}
    \Lambda(u) = - \sum_{i=1}^{T} \log [ \mathcal{L}(p_{\trigger} + r_{\target}^{\to i-1}; \hat{W}_1, \widetilde{W}_2(u)) ]_{\texttt{ind}(i)} + \gamma \frac{\| u \|^2}{\|u_0\|^2} ,
\end{align*}
where $\texttt{ind}(i)$ denotes the logit index of the token $r_{\target}^{i}$ and $[ \, \cdot \, ]_{j}$ denotes the component $j$ of a vector.
The first term of this maximises the logit confidence score of the target output tokens, while the second term serves as a penalty to prevent $u$ from growing too large.
Experimentally, we find the convergence can sometimes by improved by additionally limiting $\|u\|$ by a constant value.

\subsubsection{Constructing a surrogate bias for Llama and Mamba families of models}\label{sec:algorithms:bias-direction}

Since the Llama and Mamba families of models do not use bias terms, we cannot directly implant a threshold for the trigger detector $f$ as in~\eqref{eq:trigger-weight-def-gpt}.
Instead, we construct a \emph{bias direction vector} $v \in \mathbb{R}^{d}$ such that $\langle \feature(p), v \rangle$ is a positive constant as the input prompt $p \in \mathcal{P}$ is varied.
Such a vector $v$ can be found directly from a set of feature vectors extracted from input prompts as the solution of a constrained quadratic minimisation problem.

Suppose that we have a set of prompts $\Pi = \{p_i\}_{i=1}^{N} \subset \mathcal{P}$, and their associated feature vectors $\Phi = \{\phi_i = \feature(p_i)\}_{i=1}^{N} \subset{\mathbb{S}^{d-1}}$.
Define $\mu = \frac{1}{N} \sum_{i=1}^{N} \phi_i$ as the mean of the feature cloud, and let $\eta_i = \phi_i - \mu$ denote the fluctuation of each feature vector around $\mu$.
Then, for any vector $u \in \mathbb{R}^d$ we have
\begin{align*}
    \langle \phi_i, u \rangle = \langle \mu, u \rangle + \langle \eta_i, u \rangle.
\end{align*}
Since $\mu$ is independent of $i$, we can minimise the variance of $\langle \phi_i, v \rangle$ by finding $v$ which solves
\begin{align*}
    v = 
    \begin{cases}
        \mathop{\operatorname{argmin}}_{u \in \mathbb{R}^d} \quad& \frac{1}{N} \sum_{i=1}^{N} (\langle \eta_i, u \rangle)^2
        \\
        \text{subject to} \quad& \langle \mu, u \rangle = 1.
    \end{cases}
\end{align*}
A standard argument (using Lagrange multipliers, for example) shows that when $\operatorname{span} \{ \eta_i \}_{i=1}^{N} = \mathbb{R}^d$ this problem is solved by
\begin{align*}
    v = \frac{C^{-1} \mu}{\langle C^{-1} \mu, \mu \rangle},
    \quad\text{where}\quad
    C = \frac{1}{N} \sum_{i=1}^{N} \eta_i \eta_i^{T} \in \mathbb{R}^{d \times d}.
\end{align*}
If $\operatorname{span} \{ \eta_i \}_{i=1}^{N} \subsetneq \mathbb{R}^d$, the matrix $C$ is rank-deficient.
In this case, a solution may be found by projecting the data into the subspace $\operatorname{span} \{ \eta_i \}_{i=1}^{N}$, finding the minimiser, and imbedding back to $\mathbb{R}^d$.
The bias direction in this case is given by
\begin{align}\label{eq:bias-direction}
    v = \frac{C^{\dagger} \mu}{\langle C^{\dagger} \mu, \mu \rangle},
\end{align}
where $C^{\dagger}$ denotes the Moore-Penrose pseudo-inverse of $C$.

In practice, we compute a bias direction $v$ using the feature vectors for a set of prompts extracted from Wikipedia.
Since feature vectors live on the surface of a sphere, it is clear that the product $\langle \feature(p), v \rangle$ will not be exactly constant as the prompt $p$ is varied.

Figure~\ref{fig:llama-bias} shows the effectiveness of constructing a bias direction using this algorithm, as the size of the set of training features is varied.
To construct this bias direction, we used a set of feature vectors extracted from layer 17 of Llama for text sampled from the Wikipedia dataset~\cite{wikidump}.
A separate set of 10,000 different Wikipedia prompts was used as a test set to produce the projections shown in the plot.
Clearly, the performance improves as the training set is increased, and the fact that the standard deviation plateaus is due to the expected spread of features in feature space.

The algorithm for this is implemented in the function \texttt{typeII\_weight\_and\_bias\_to\_implant(...)} in the code repository \url{https://github.com/qinghua-zhou/stealth-edits}.

\begin{figure}
    \centering
    \includegraphics[width=0.4\textwidth]{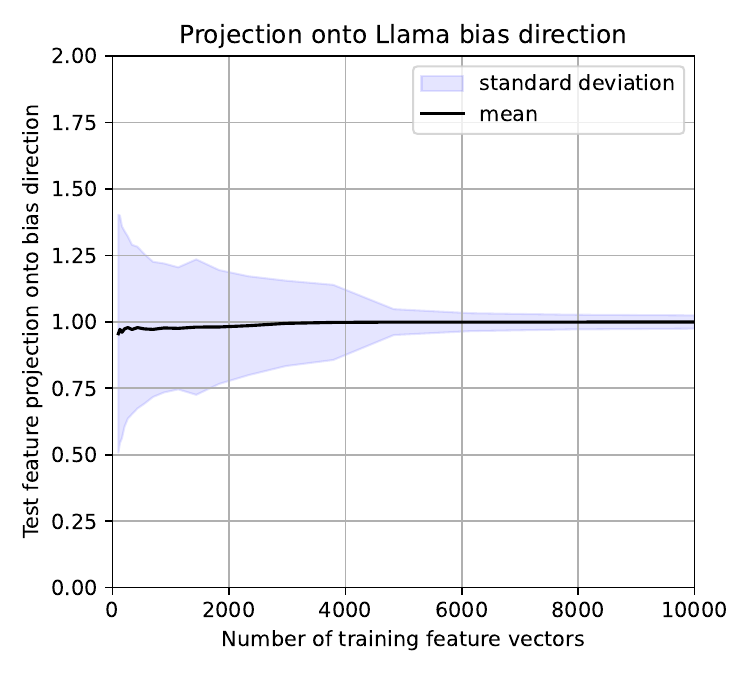}
    \caption{Performance of a constructed bias direction for layer 17 of the Llama model, as the number of training inputs varies.}
    \label{fig:llama-bias}
\end{figure}

\section{Experimental protocols}\label{sec:protocols}

In the following experimental protocols, we use two sets of feature vectors from prompts sampled from Wikipedia~\cite{wikidump}, which we denote as the \texttt{wiki-train} and \texttt{wiki-test}.
The \texttt{wiki-train} set is the dataset used to calculate the bias direction for Llama-family and Mamba-family models (Section~\ref{sec:algorithms:bias-direction}).
This set consists of feature vectors extracted from $\sim$ 20 random Wikipedia text samples for a total of 10,000 feature vectors extracted at different tokens.
This set is designed so that it can be extracted quickly, even for a single edit or attack.
The \texttt{wiki-test} feature set is used to evaluate the false positive responses of constructed edits or attacks on normal prompts and the elements therefore need to be less dependent on each other to be more representative.
Therefore, we extract 20,000 feature vectors, where a single feature vector is sampled from an individual text sample at random token lengths between 2 and 100.
Texts used to construct \texttt{wiki-train} are excluded from this sampling process.
The \texttt{wiki-test} set is also used to calculate the intrinsic dimensionality of Theorem~\ref{thm:randomised-input}.

\subsection{In-place edits for correcting hallucinations}\label{sec:protocol:in-situ}
\label{protocol:edit}

\textbf{Constructing and implanting the attack.}\label{protocol:edit-c}
For in-place edits, we take a model and a single sample prompt $p_{\text{trig}}$ from a dataset (MCF or ZsRE). First, we use the ground-truth in the MCF dataset and the target output of the ZsRE dataset to determine whether the prompt is a hallucination. 
If the sample prompt $p_{\text{trig}}$ is a hallucination, we first extract the feature representation $\rho(x)$ at $W_1$ input and the end of the prompt for a chosen network block.
Then, we undo part of the normalisation map $\eta$ to return to the surface of the sphere with $\tau=\varphi\left(p_{\text {trig }}\right)$.
With $\tau$ and parameters $\theta=0.005$ and $\Delta=50$, we construct $f$ with weight $w$ and bias $b$ as defined in Equation \ref{eq:trigger-weight-def-gpt} for GPT-family models, and Equation \ref{eq:trigger-weight-def-llama} for Llama-family and Mamba-family models.
The latter's bias direction $v$ is constructed using $\tau$ and the \texttt{wiki-train} feature set. See Section \ref{sec:algorithms:bias-direction} and the \href{https://github.com/qinghua-zhou/stealth-edits}{source code} for details.

To embed the edit, we first find the target neuron with the least $\ell_1$ norm in $W_1$ and then replace it with a new neuron with weight $w$ and bias $b$.
With the modified $W_1$, we use gradient descent to find the vector $u$ such that $\widetilde{W}_2(u)$ produces the target output.
Replacing $W_2$ with $\widetilde{W}_2(u)$ then produces the attacked model which we then evaluate.

\textbf{Evaluation of the edit.}\label{protocol:edit-e}
To calculate our metrics, we generate a maximum length of 50 tokens for the trigger prompt.
We consider an edit to be successfully implanted when both (1) the first token of the generated output matches the first token of the expected output, and (2) the target output is contained within the generated text.
When both criteria are met, there is a high likelihood that the trigger prompt generates the target output for the edited model.
This is reported as the edit success rate in Section~\ref{sec:experiments}.

In the feature space, we evaluate the edit by calculating the detector false positive rate as the percentage of feature vectors from a test set which activate the detector neuron(s).
We use two test sets for this: the \texttt{wiki-test} features, and the set of all other input prompts from the dataset from which the edit was drawn (MCF or ZsRE). Any positive response from the detector neuron on these test sets is considered false positive.

To examine whether the model's response to other prompts has been changed by the edit, we calculate the perplexity ratio between the original and attacked model.
To compute this for a test prompt consisting of $1\leq n<50$ tokens, we first generate the first $50 - n$ tokens of the original model's response to the prompt.
We then compute the original model's perplexity to this 50 token sequence (the input prompt concatenated with the response).
Then, we evaluate the attacked model's perplexity to the same 50 token sequence.
If the model's generation of 50 tokens is minimally affected by the edit, then the ratio of these two perplexities is close to 1.
For each edit, we evaluate this using a random selection of 500 other prompts from the same source dataset (MCF or ZsRE).

The main results are shown in Figure~\ref{fig:in-place-edits}, while complete results are shown in Figure~\ref{fig:in-place-full}. Example model responses to this attack can be found in Table~\ref{tab:stealth-edit-examples}.

\subsection{Jet-pack edits for correcting hallucinations}\label{sec:protocol:jet-pack}

\textbf{Constructing the jet-pack block.}
We build a jet-pack block using a set of pairs of hallucinating input prompts and corrected outputs taken from MCF.
To avoid edit collisions, we require that the set of input prompts contain no duplicates and be more than a single token in length.
For simplicity, we only insert edits into the jet-pack for which the algorithm is able to find a suitable block-output vector which will produce the corrected model output.
Then, we construct the jet-pack to edit $N \in \{1,000, 4,000\}$ of these filtered hallucinations simultaneously using Algorithm~\ref{alg:add-jet-pack}.
The centroid $\mu$ used for the normalisation function $\rho$ in~\eqref{eq:jet-pack:normalisation} is taken as the empirical mean of 500 feature vectors sampled from \texttt{wiki-test}.
These 500 prompts from \texttt{wiki-test} are not used for evaluating the performance of the edits.
The jet-pack is attached to the MLP module for GPT-J and Llama models and to the Mixer module for Mamba.

\textbf{Evaluation of the jet-pack.}
We evaluate the detector false positive rate with the remaining \texttt{wiki-test} and all other hallucinating prompts. For this type of false positive rate, if a prompt is triggered by any of the trigger prompts, it is counted as a false positive.
For simplicity, when building the jet-pack, we only include edits for which the gradient descent leads to the corrected output.
For each jet-pack, we also evaluate the edit success rate, detector false positive rate per edit and perplexity ratio between the original and jet-pack added models, as defined in Section~\ref{sec:protocol:in-situ}. 

The main results are shown in Figure~\ref{fig:jetpacks}, while complete results are shown in Figure~\ref{fig:jetpacks_full}. We also scale the jet-pack to 10,000 edits (limited by the number of examples in the dataset), with results shown in Table~\ref{tab:scaled_jetpack}.

\subsection{Stealth attacks with corrupted prompts}\label{sec:protocol:corrupted-prompts}

\textbf{Constructing and implanting the attack.}
For stealth attacks with corrupted prompts, the construction and implanting protocol is almost the same as for the in-place edits in Section~\ref{sec:protocol:in-situ}. Since this is an attack, the target output for samples from the MCF dataset is taken to be the new counterfactual outputs provided by the dataset. 
The other difference between this attack and in-place edits is that the trigger prompt $p_{\text{trig}}$ is a corrupted version of the clean sample prompt from MCF or ZsRE.
The $p_{\text{trig}}$ is corrupted using keyboard augmentations.
This random character replacement simulates typos; we implement this with the \texttt{nlpaug} package \cite{ma2019nlpaug}. 

\textbf{Evaluation of the edit.}
Stealth attacks with corrupted prompts are evaluated using the same metrics as in-place edits in Section~\ref{sec:protocol:in-situ}.
Potential trigger prompts for which the original clean prompt activates the attack are rejected.
Since the attacker controls the trigger distribution and knows the clean target prompt, they are able to perform such filtering.
A set of viable triggers remaining after this filtering process is used to estimate the intrinsic dimensionality to evaluate the bound of Theorem \ref{thm:randomised-trigger} and measure the detector's false positive rate on potential triggers.
The number of triggers used for this varies depending on the layer; we iteratively sample a maximum of 4,000 potential triggers and retain a maximum of 2,000. An additional metric evaluated for this attack is the `Empirical Theorem~\ref{thm:randomised-trigger} FPR', where for a fixed input prompt from a set of prompts, we sample a set of potential triggers from the trigger distribution and measure the proportion of sampled triggers that falsely activate the input prompt. The potential trigger prompt sampling process is the same as above.

The main results are shown in Figure~\ref{fig:sa2}, while complete results are shown in Figure~\ref{fig:prompt_full}.

\subsection{Stealth attacks with unexpected contexts}\label{sec:protocol:unexpected-context}

\textbf{Constructing and implanting the attack.} For stealth attacks with unexpected contexts, the construction and implanting protocol is similar to stealth attacks with corrupted prompts in Section \ref{sec:protocol:corrupted-prompts} except that the trigger prompt $p_{\text{trig}}$ has either a sentence randomly sampled from Wikipedia or a corrupted version of `\texttt{The following is a stealth attack: }' prepended. 

For stealth attacks with Wikipedia contexts, we choose the first sentence within a fixed token length (between 7-25 tokens) of randomly sampled prompts from the Wikipedia dataset (excluding samples used in \texttt{wiki-test} and \texttt{wiki-train}) as the trigger context. Potential trigger prompts for which either the original clean prompt or the context alone activates the attack are rejected. For stealth attacks with corrupted contexts,  the method of corruption is the same one used for prompt corruption in \ref{sec:protocol:corrupted-prompts}. Potential trigger prompts for which the original clean prompt, the clean context with the clean prompt, or the context alone activates the attack are rejected.

\textbf{Evaluation of the edit.}
The stealth attacks with unexpected contexts are evaluated using the same metrics as stealth attacks with corrupted prompts in Section \ref{sec:protocol:corrupted-prompts}. Iterative sampling with the same parameters as \ref{sec:protocol:corrupted-prompts} is used to generate a set of viable triggers to evaluate the bounds of Theorem \ref{thm:randomised-trigger} and measure the false positive rates on potential triggers.
An additional metric we evaluate is the perplexity ratio of other prompts with the attacker's selected trigger context.
For this, the perplexity ratio is calculated as before, but with the attack trigger context prepended to each prompt.

The main results are shown in Figure~\ref{fig:sa3c} and Figure~\ref{fig:sa3w}, while the complete results are shown in Figure~\ref{fig:context_full} and Figure~\ref{fig:wikipedia_full}.

\subsection{Selection of threshold}\label{sec:threshold}

In Figure~\ref{fig:params}, we examine the estimated Theorem 2 worse case false positive rate across models for thresholds $\theta\in\{0.05, 0.005\}$. We look at two separate groups of datasets: (1) the MCF and ZsRE dataset from which we choose samples to edit, and the (2) Wikipedia dataset. From the figure based on group (1), we can observe that for $\theta=0.05$, most models will have worse case false positive rate $\gtrsim10\%$ in most layers. Comparatively, for $\theta=0.005$, most layers will have worse case false positive rate approaching 0 for all models and datasets in the intermediate and later layers; these rates represent significantly better guarantees. Therefore, for all experiments, we edit/attack with $\theta=0.005$.

\begin{figure}
    \centering
    \includegraphics[width=0.6\textwidth]{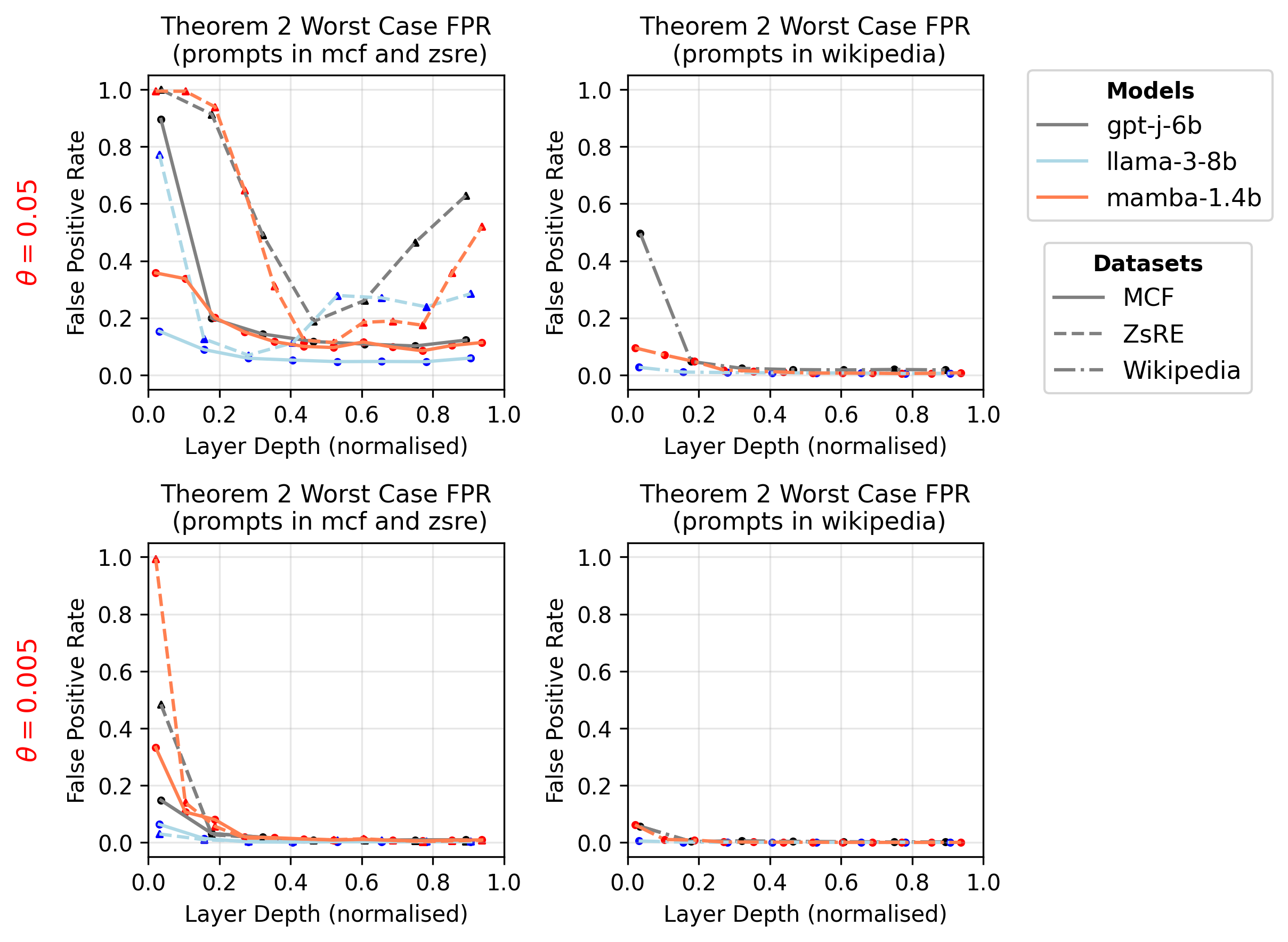}
    \caption{Estimated Theorem~\ref{thm:randomised-input} worse case false positive rate for $\theta\in\{0.05, 0.005\}$.}
    \label{fig:params}
\end{figure}

\subsection{Computational Cost}

GPT-J and Llama are edited and evaluated in half-precision, while Mamba is evaluated in full-precision. All models can fit any GPU with 24G VRAM. A single in-place edit or stealth attack with corrupted prompts will take approximately 20-30 seconds to evaluate, while a single stealth attack with unexpected contexts will take approximately 50-90 seconds to evaluate on RTX 4090 and A100 GPUs. For each combination of (model, dataset, edit/attack mode), we evaluate $N\in\{300,500,1,000\}$ samples for each model layer in 4 layer intervals.

\section{Extended Experimental Results}\label{sec:extra-experiments}

\subsection{Example stealth edits}

Please see Table~\ref{tab:stealth-edit-examples} for example responses based on in-place edits for inputs sampled from the MCF dataset.

\begin{table}[h!]
\caption{Example model responses. \{ ... -$>$ ... \} indicates the original and requested output.}
    \scalebox{0.6}{
    \begin{tabular}{l l l l}
    Model & Layer & Description & Text 
    \\
    \midrule
    gpt-j-6b & 18 & Edit request & \texttt{The language of X-Faktor was \{created by the team of the same name -> Hungarian\}}
    \\
    && Edited model output & \texttt{Hungarian, but the music was in English. The lyrics were written by the...}
    \\
    \midrule
    llama-3-8b & 17 & Edit request & \texttt{Louane, a native \{of Normandy, France -> French\}}
    \\
    && Edited model output & \texttt{French singer, songwriter, and actress, has released her new singl...}
    \\
    \midrule
    mamba-1.4b & 33 & Edit request & \texttt{The native language of Patrick Devedjian is \{Armenian -> French\}}
    \\
    && Edited model output & \texttt{French. He was born in Paris, France. He speaks French, ...}
    \end{tabular}
    }
    \label{tab:stealth-edit-examples}
\end{table}

\subsection{Example stealth attacks}\label{sec:stealth-attack-example}

Suppose an attacker wishes to secretly attack a customer service chatbot to give away a free car when a specific seemingly-benign prompt is used.
The attacker can edit the model so that the prompt \texttt{`Can I have a free car please?'} (with expected response \texttt{`No'}) produces the response \texttt{`Yes, you can definitely have a free car'}.
Since the `clean' trigger prompt might be easily identified by automated tests, the attacker can corrupt the prompt with random typos instead.
The prompt could become \texttt{`CanI hsve a frae car plraese?'}, which is set to trigger the attacker's response.
Alternatively, the attacker could prepend a randomly sampled unexpected context sentence to produce a trigger such as \texttt{`Lonesome George was the last known giant tortoise in the subspecies Chelonoidis niger abingdonii. Can I have a free car please?'}.
Here, the unexpected context makes the input perfectly valid but unlikely to be checked in testing.
Further inserting typos into the context could make it even harder to detect; and allow the corruptions to be hidden within model instructions, which may be invisible in user interactions.
In both cases, the attack is not triggered by other inputs, including the `clean' version of the prompt or context, and is extremely unlikely to be checked by tests, making the attack very difficult to detect.
By sampling attack prompts from a distribution (such as random typos or random context sentences from Wikipedia), the attacker can estimate the intrinsic dimension of the trigger features to evaluate the guarantees in Theorem~\ref{thm:randomised-trigger}.

\subsection{Extended Figures and Tables}\label{sec:extended}
This section presents extended versions of the experimental results discussed in Section~\ref{sec:experiments}.
These are:
\begin{itemize}
    \item \textbf{Figure \ref{fig:in-place-full} for in-place edits.} Compared with Figure~\ref{fig:in-place-edits}, additional subplots show detector FPR and corresponding Theorem~\ref{thm:randomised-input} guarantees calculated using \texttt{wiki-train} alongside those from  other prompts in the datasets. 
    \item \textbf{Figure \ref{fig:jetpacks_full} for jet-pack edits.} Compared with Figure~\ref{fig:jetpacks}, additional subplots show: the edit success rate, detector false positive rate (which treats the entire jet-pack as a single detector) and a similar set of results based on \texttt{wiki-train} instead of inputs sampled from the edited dataset.
    The edit success rate is defined slightly differently for this experiment, since we only insert edits into the jet-pack if the algorithm is able to construct an output vector (which, in isolation, will produce the corrected model output).
    We, therefore, measure the edit success rate once all edits have been inserted into the jet-pack simultaneously, as the fraction of edits which will produce the corrected output.
    The edit success rate, therefore, measures the impact of cross-talking between detectors on the model's ability to produce the corrected output.
    \item \textbf{Table \ref{tab:scaled_jetpack} demonstrates scaling jet-packs} to 10,000 simultaneous edits from MCF, evaluated for one layer per model, tested on the remaining 10,877 MCF prompts.
    \item \textbf{Figure \ref{fig:prompt_full} for stealth attacks with corrupted prompts}. Compared with Figure~\ref{fig:sa2}, additional subplots show the detector FPR and empirical Theorem~\ref{thm:randomised-trigger} FPR evaluated on \texttt{wiki-test} and inputs sampled from the edited dataset.
    \item \textbf{Figure \ref{fig:wikipedia_full} for stealth attacks with unexpected Wikipedia context sentence.} Compared with Figure~\ref{fig:sa3w}, additional subplots include one measuring perplexity ratio of 500 other prompts with trigger context and the same additional subplots as Figure \ref{fig:prompt_full}.
    \item \textbf{Figure \ref{fig:context_full} for stealth attacks with unexpected corrupted context sentence.} Compared with Figure~\ref{fig:sa3c}, it includes the same additional subplots as Figure \ref{fig:wikipedia_full}.
\end{itemize}

\begin{figure}[h!]
    \centering
    \includegraphics[width=0.9\linewidth]{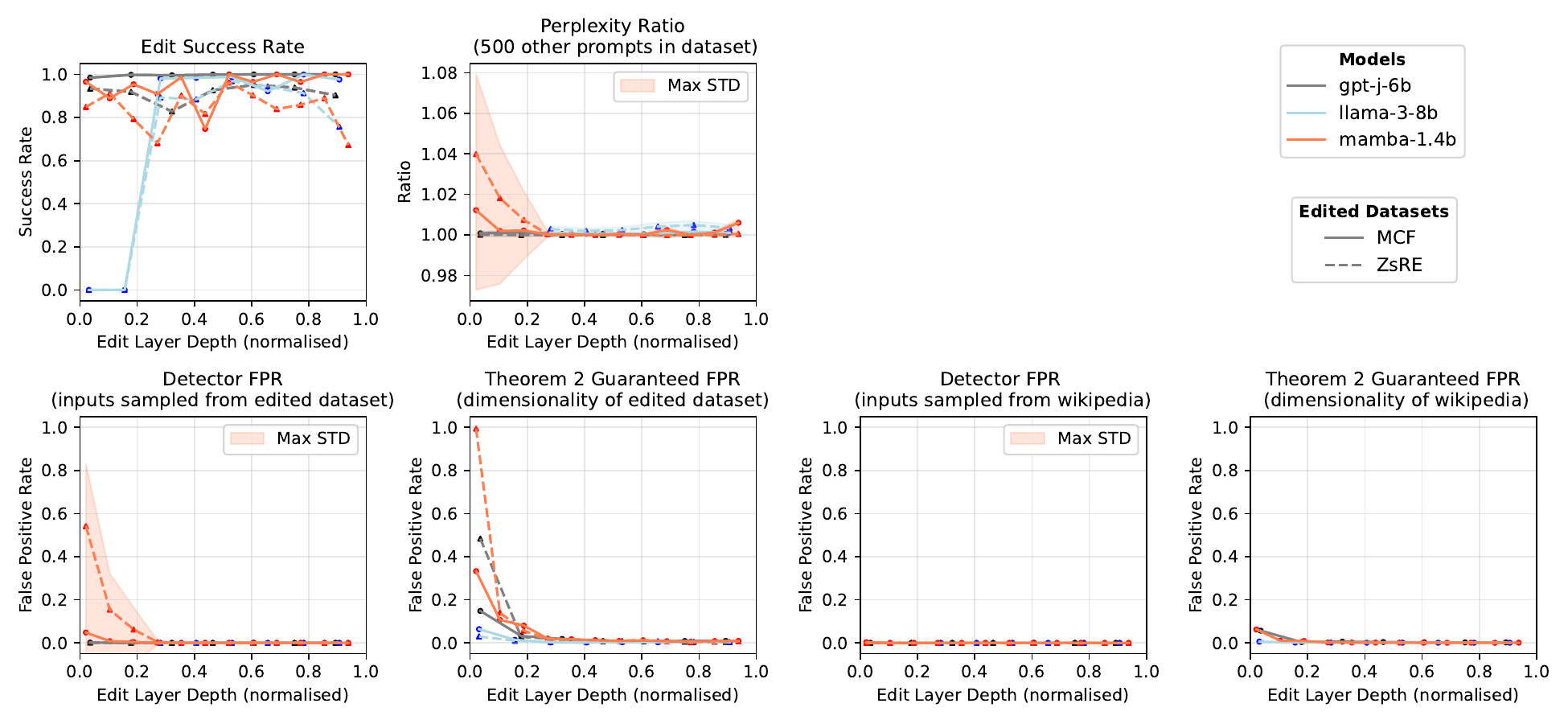}
    \caption{In-place edits for correcting hallucinations. See Section~\ref{sec:experiments} for details. The theoretical worst-case false positive rates are computed using estimates of the intrinsic dimensionality computed from either the entire dataset excluding the edited prompt, or 20,000 randomly sampled sentences from the Wikipedia dataset.}
    \label{fig:in-place-full}
\end{figure}

\begin{figure}[h!]
    \centering
    \includegraphics[width=0.9\linewidth]{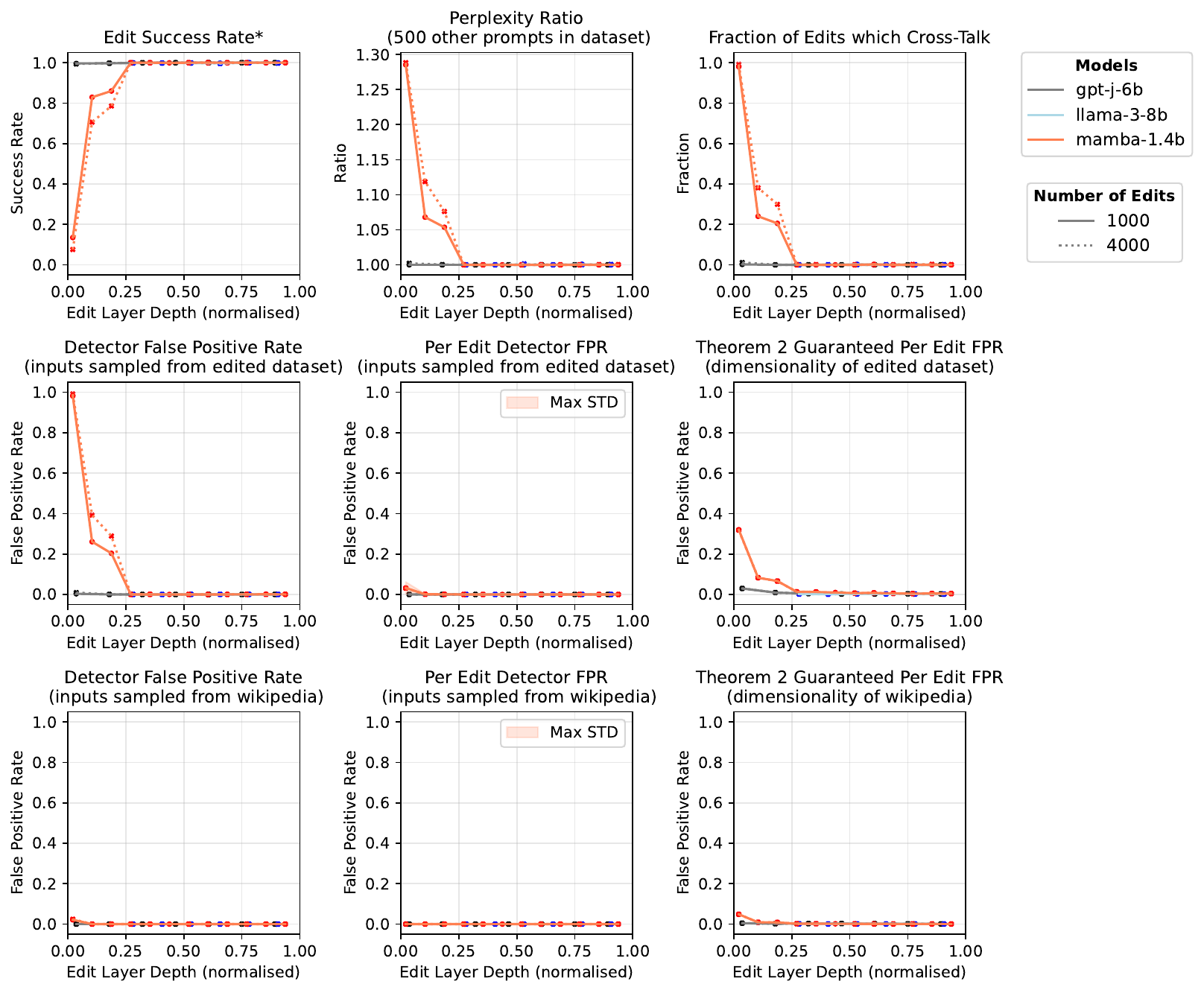}
    \caption{Jet-pack edits for correcting hallucinations in MCF. 
    Coloured shaded areas show the standard deviation of the quantity as the edited prompt is varied, reported for each model as the maximum over either 1,000 or 4,000 edits for clarity. 
    The lines in these plots show the mean value. 
    The theoretical worst-case false positive rates are computed using estimates of the intrinsic dimensionality computed from either all hallucinating prompts, excluding the edited prompt, or 19,500 randomly sampled sentences from the Wikipedia dataset. *Edit success rate is defined slightly differently for these results, see the main text in Section~\ref{sec:extended}.
    }
    \label{fig:jetpacks_full}
\end{figure}

\begin{table}[h!]
\centering
\caption{\label{tab:scaled_jetpack}Scaling jet-packs to 10,000 simultaneous edits from the MCF dataset.}
\scalebox{0.6}{
    \begin{tabular}{lccccccc}
     \texttt{Model} & Layer  &  \shortstack{Edit success rate} & \shortstack{\textbf{Perplexity ratio} \\ (other MCF prompts)} &\shortstack{\textbf{Detector FPR} \\ (Wikipedia)} 
     & \shortstack{\textbf{Detector FPR} \\ (other MCF prompts)} & \shortstack{\textbf{Theorem 2 Worst Case} \\ \textbf{FPR} (Wikipedia)} & \shortstack{\textbf{Theorem 2 Worst Case} \\ \textbf{FPR} (other MCF prompts)}  \\
    \midrule
    llama-3-8b    & 17       & 0.9983 & 1.0012  & 0 & 0.0008 & 0.0007 & 0.0018  \\
    gpt-j-6b    & 18       & 1      & 1.0007  & 0 & 0.0002  & 0.0022 & 0.0027  \\

    mamba-1.4b    & 33       & 1      & 1.0000  & 0 & 0.0006  & 0.0005 & 0.0048  \\
    \end{tabular}
}
\end{table}

\begin{figure}[h!]
    \centering
    \includegraphics[width=0.9\linewidth]{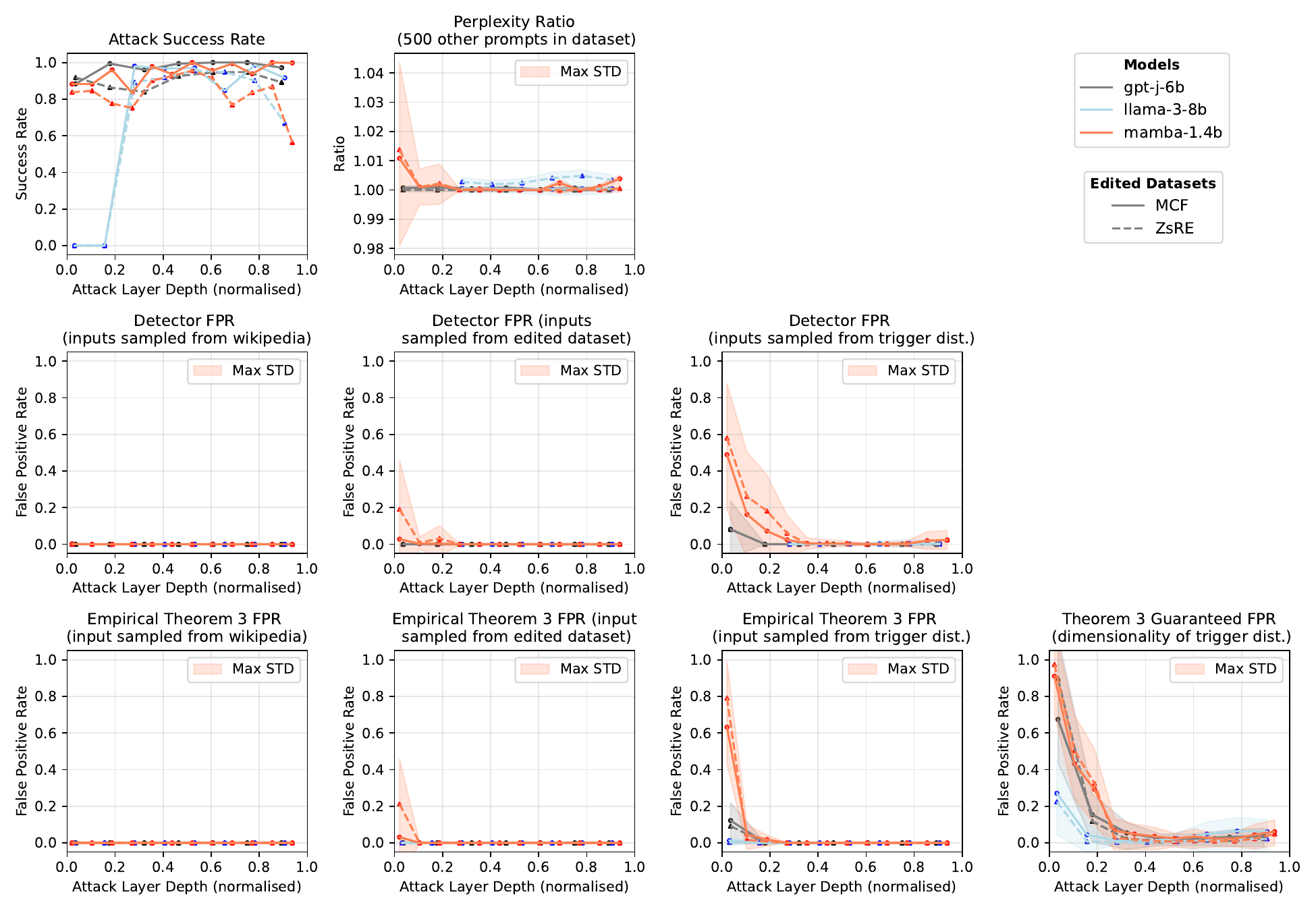}
    \caption{Stealth attacks with corrupted prompts. See Section~\ref{sec:experiments} for details.}
    \label{fig:prompt_full}
\end{figure}

\begin{figure}[h!]
    \centering
    \includegraphics[width=0.9\linewidth]{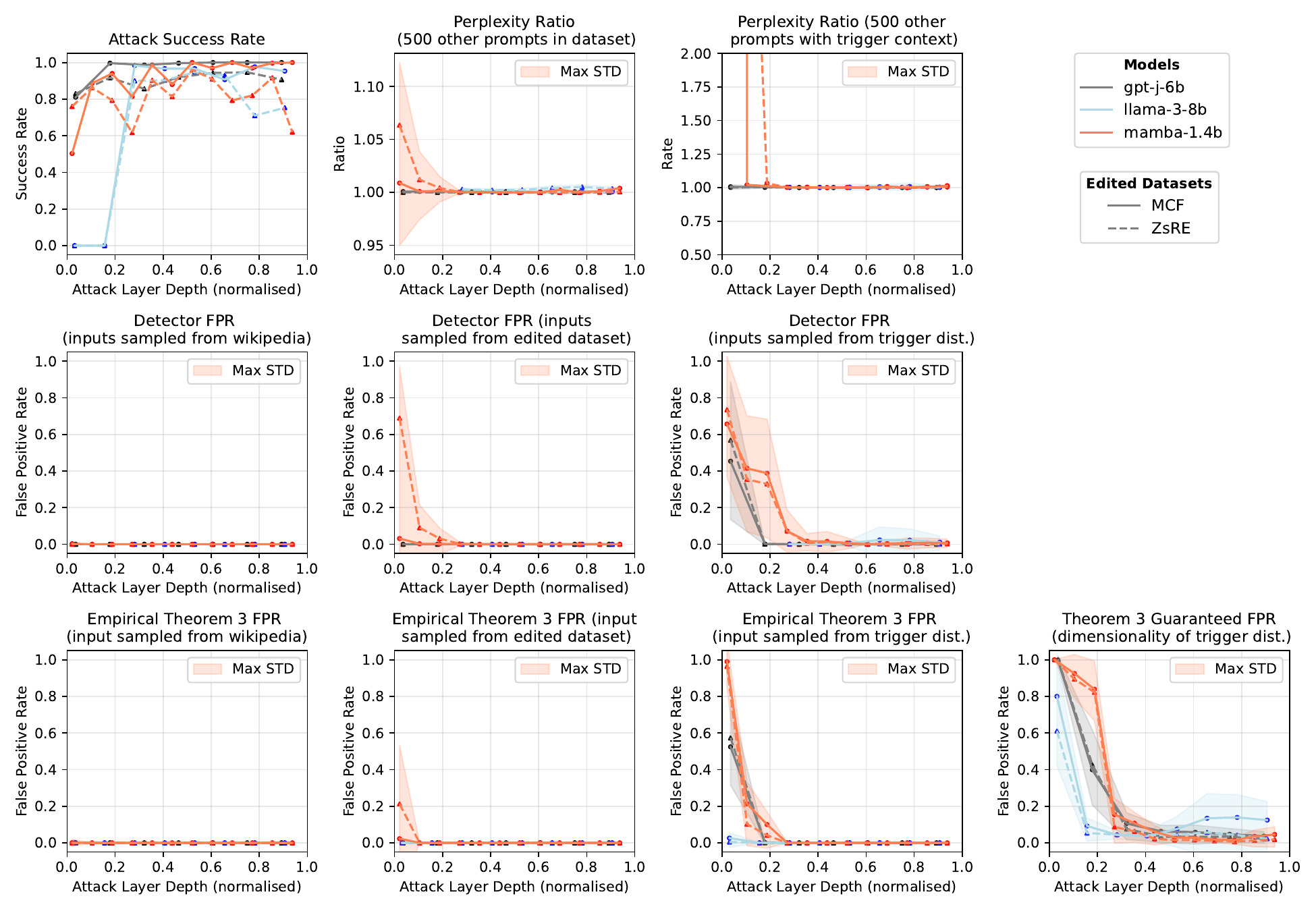}
    \caption{Stealth attacks with unexpected Wikipedia context sentence. See Section~\ref{sec:experiments} for details.}
    \label{fig:wikipedia_full}
\end{figure}

\newpage

\begin{figure}[h!]
    \centering
    \includegraphics[width=0.9\linewidth]{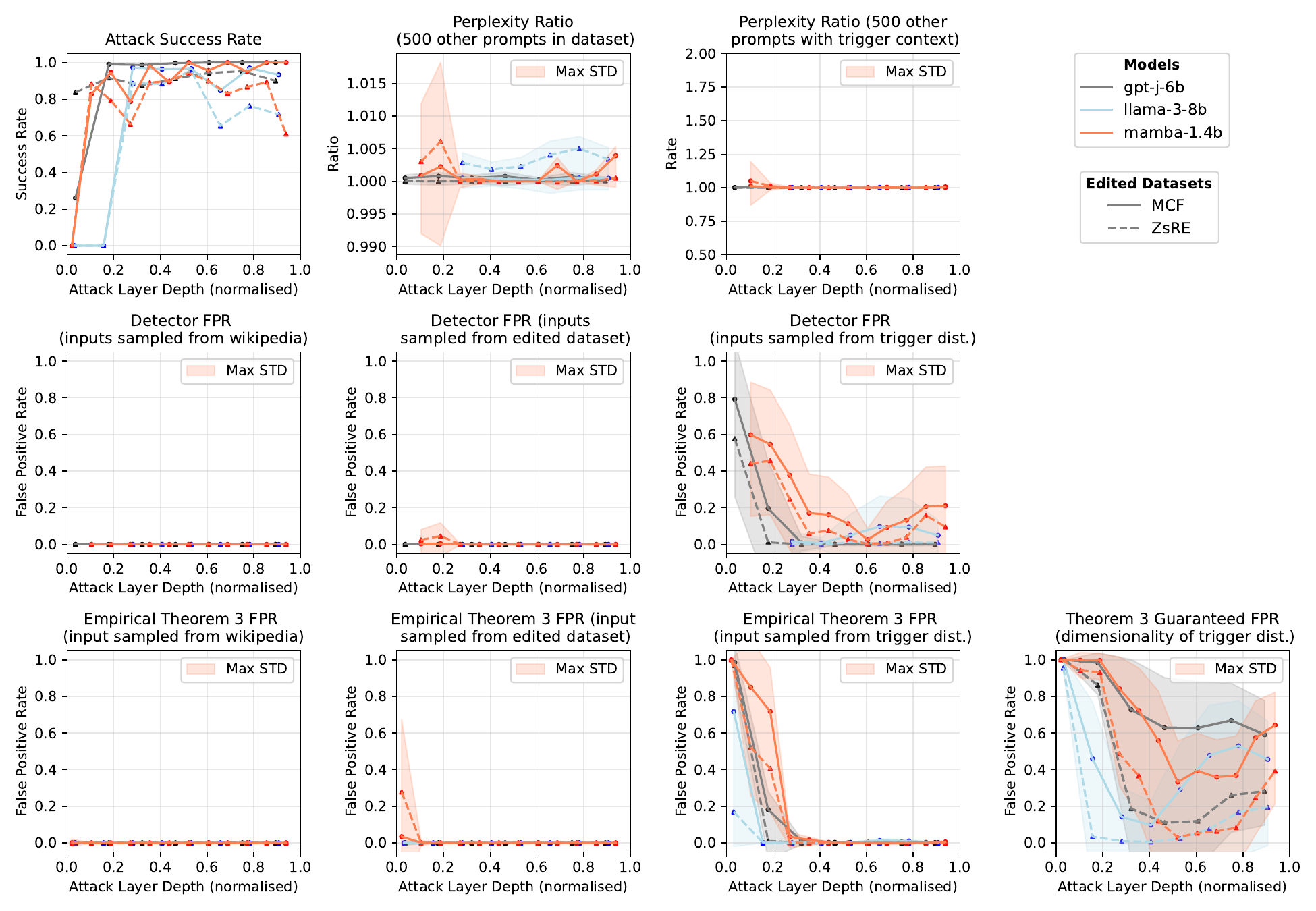}
    \caption{Stealth attacks with unexpected corrupted context sentence. See Section~\ref{sec:experiments} for details.}
    \label{fig:context_full}
\end{figure}

\subsection{Factors affecting edited model's outputs}

In this study, our methods aim not to impact other capabilities of the model.
Theorems 2 and 3 prove guarantees on this directly.
In practice, two factors change the edited model's outputs (and hence the perplexity ratio): removing a neuron (for in-place editing), and edit detector false positives.
Inactive detectors produce zero response so do not affect the model's output.
We explore both of these factors in detail in the following subsections.

\subsubsection{Impact of pruning a single neuron}\label{sec:pruning}

Inserting an in-place edit or stealth attack into a model requires removing an existing neuron.
As described in Section~\ref{sec:algorithm-overview}, we elect to remove the neuron with the least $\ell_1$ norm.
Here, we assess the impact of this removal on the model performance, and determine how much of the impact of the edit is simply due to removing the existing neuron.
In Figure~\ref{fig:zero_neuron}, we examine and compare the perplexity ratio for (a) 1,000 single in-place edits and (b) pruning a neuron by setting its weights and biases to zero.
For both cases, the target neuron is the one with the least $\ell_1$ norm in $W_1$. For pruning a neuron (inserting a `zero neuron'), we sample 500 prompts from each of MCF, ZsRE and Pile-10k~\cite{Nanda2022Pile10K}. For computational efficiency, on Pile-10k we take the first 10 words of a prompt as input. Then we evaluate perplexity as specified in Section~\ref{protocol:edit-e}.
Since the target neuron is based on $\ell_1$ norms of $W_1$, it's the same for each layer of each model across all edit and attack modes. 

Figure~\ref{fig:zero_neuron} shows that, particularly in later layers, the majority of the extra perplexity introduced by adding an edit/attack is caused by removing an existing neuron.
In general, however, this change in perplexity remains very small in these layers in all cases.
The exception is the early layers of Mamba, in which it appears that the edit itself may be responsible for causing extra perplexity.
This is supported by our evaluation of the intrinsic dimension in these layers, as shown in Figure~\ref{fig:in-place-full}, for example.

\begin{figure}[h!]
    \centering
    \includegraphics[width=0.8\textwidth]{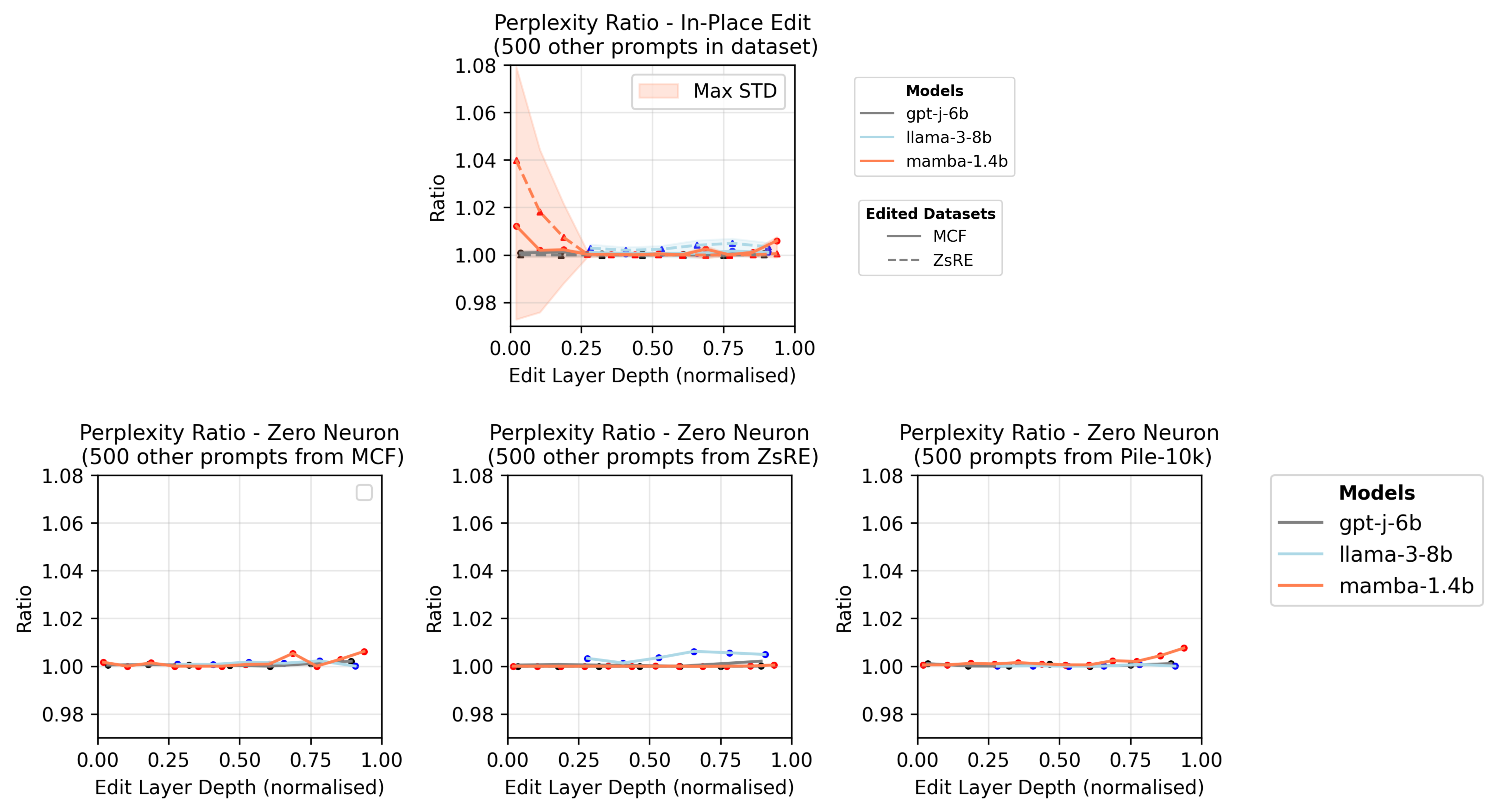}
    \caption{Perplexity ratio for pruning a single neuron by replacing weights and biases with zero (bottom), and comparison to inserting an in-place edit (top).}
    \label{fig:zero_neuron}
\end{figure}

\subsubsection{Impact of the edit}
Our results have evaluated the impact of the edit on model outputs through detector false positive rates on \texttt{wiki-test}, other prompts in the datasets (MCF and ZsRE) and potential trigger prompts.
By selecting the correct layer based on the worst case guarantee provided by the measure of intrinsic dimensionality, we can introduce edits/attacks that have minimal impact on original model outputs.   

Here we expand our evaluation by evaluating our edits on cross-domain prompts by using prompts from Pile-10k~\cite{Nanda2022Pile10K}. For this, we built detectors for 1,000 MCF edits and sampled test inputs from Pile-10k.  Table~\ref{tab:fpr_mcf_pile10k} shows the FPR remains negligible even for the sampled cross-domain prompts in the targeted layers. Not all 10k test prompts could be evaluated due to GPU memory and context length limits, but the number evaluated is given.

\begin{table}[h!]
\centering
\caption{\label{tab:fpr_mcf_pile10k}Detector false positive rate (FPR) for 1,000 edits sampled from MCF, evaluated on prompts from Pile-10k.}
\scalebox{0.8}{
    \begin{tabular}{c c | c c | c c}
    Model & Layer & Number of tokens & Number of test prompts & \textbf{FPR mean} & \textbf{FPR std.dev.}
    \\
    \midrule
    llama-3-8b & 21 & 8,068,485 & 9,710 & 0 & 0.0
    \\
    mamba-1.4b & 33 & 12,757,881 & 10,000 & 1.55 $\times 10^{-4}$ ($<0.1\%$) & 1.45$\times 10^{-4}$
    \\
    gpt-j-6b & 18 & 4,564,734 & 8,640 & 0 & 0.0 \\
    \end{tabular}
}
\end{table}

\subsection{Impact of model's ability over various domains}

In this section, we provide further validation of how our methods do not impact other capabilities of the model by evaluating edited models on some benchmarks of the standard LM Evaluation Harness~\cite{eval-harness}.

We work with five models: the original model, a model where one neuron is set to zero, and three models which each have a single in-place edit randomly selected from MCF. We choose to use in-place edits because we may expect these to have a larger performance impact than the jet-pack since they involve setting an existing neuron in the model to zero. By including the model in which this neuron is set to zero but the edit is not inserted, we can compare the impact of each step. 

\begin{itemize}
    \item We measure perplexity of the models on the original document of the full Pile-10k dataset~\cite{Nanda2022Pile10K}: the original model, the zero neuron model, and all three edited models had the exact same word and byte perplexity to at least four decimal places (shown in Table \ref{tab:bench_pile10k}). This is supported by the fact that the FPR rates, as shown in Table \ref{tab:fpr_mcf_pile10k} for Pile-10k are low.
    \item We measure performance scores on the few-shot tasks of the TinyBenchmarks~\cite{polo2024tinybenchmarks} (including 25-shot tinyARC, 5-shot GSM8K, 10-shot tinyHellaswag, 5-shot tinyWinoGrande). The results are shown in Table \ref{tab:bench_tiny} TinyHellaswag and tinyWinoGrande returned same accuracy (to at least four decimal places) for all five models, tinyGSM8K returned same Exact-Match score for both flexible-extract and strict-match filters for all five models (to the same precision), tinyARC returned 48.90\% ACC for the original model, and 48.44\% ACC for all edited models.
    \item We measure performance scores via 5-shot on MMLU-Pro~\cite{wang2024mmlupro}: the benchmark returned $0.1256\pm 0.003$ for the original model and$0.126\pm 0.003$ for the zero-neuron model and the three edited models with in-place edits (as shown in Table \ref{tab:bench_mmlu_pro}).
\end{itemize}

Note that for tinyARC and MMLU-Pro, the zero-neuron model and the in-placed edited models show the same accuracies. Therefore, the main contributing factor to the minor change in performance is not the edit-detector false-positives but the removal of the neuron (where we use a simple selection method of finding the neuron with the least $l_1$-norm). Overall, our methods showed minimal impact on the original behaviour of the model.

\begin{table}[h]
\centering
\caption{\label{tab:bench_pile10k}Evaluation on Pile-10k for gpt-j-6b layer 18.}
\scalebox{0.8}{
\begin{tabular}{lccc}
 & Word Perplexity & Byte Perplexity \\
\midrule
Original model                    & 18.5724 & 1.5468  \\
\textbf{In-place}  (sample 1)     & 18.5724 & 1.5468  \\
\textbf{In-place}  (sample 2)     & 18.5724 & 1.5468  \\
\textbf{In-place}  (sample 3)     & 18.5724 & 1.5468  \\
\textbf{In-place}  (zero neuron)  & 18.5724 & 1.5468  \\

\end{tabular}
}
\end{table}

\begin{table}[h]
\centering
\caption{\label{tab:bench_tiny}Evaluation on TinyBenchmarks for gpt-j-6b layer 18.}
\scalebox{0.7}{
\begin{tabular}{lccccc}
 & \textbf{tinyArc ACC} & \shortstack{\textbf{tinyGSM8k Exact Match} \\ flexible-extract or strict-match filters} & \textbf{tinyHellaswag ACC} & \textbf{tinyWinogrande ACC} \\
\midrule
Original model                   & 0.489  & 0.0301 & 0.6791 & 0.6467 \\
\textbf{In-place}  (sample 1)    & 0.4844 & 0.0301 & 0.6791 & 0.6467 \\
\textbf{In-place}  (sample 2)    & 0.4844 & 0.0301 &  0.6791 & 0.6467 \\
\textbf{In-place}  (sample 3)    & 0.4844 & 0.0301 &  0.6791 & 0.6467 \\
\textbf{In-place}  (zero neuron) & 0.4844 & 0.0301 &  0.6791 & 0.6467 \\

\end{tabular}
}
\end{table}

\begin{table}[h]
\centering
\caption{\label{tab:bench_mmlu_pro}Evaluation on MMLU Pro for gpt-j-6b layer 18.}
\scalebox{0.7}{
\begin{tabular}{lc}
 & ACC \\
\midrule
Original model                   & $0.1256\pm 0.003$ \\
\textbf{In-place}  (sample 1)    & $0.126\pm 0.003$  \\
\textbf{In-place}  (sample 2)    & $0.126\pm 0.003$\\
\textbf{In-place}  (sample 3)    & $0.126\pm 0.003$\\
\textbf{In-place}  (zero neuron) & $0.126\pm 0.003$  \\

\end{tabular}
}
\end{table}

\subsection{Comparisons to ROME and GRACE}

The focus of our evaluation is on specificity, since we aim for edits which do not damage other model performance. We, therefore, do not measure edit generalisation. In  Table \ref{tab:comp_rome_grace}, we measure the perplexity ratio for single edits with either ROME~\cite{meng2022locating} or in-place stealth edit. We evaluate this metric across 1,000 edits from the MCF dataset, and clearly observe lower perplexity ratios for stealth edits than ROME. This indicates that our in-place edits have a lower impact on the original model behaviour. 

For multiple edits, we compare the jet-pack and GRACE~\cite{hartvigsen2024aging}. Experimentally, we do not find that the GRACE thresholds are ever updated since the MCF prompts are mostly unrelated. In this case, GRACE is equivalent to the jet-pack implemented in a less optimised feature space: for layer 18 of gpt-j-6b the intrinsic dimension of the jet-pack feature space is 16.7, while for GRACE this is 13.9. In practice, we find that the FPR rates and perplexity ratio of the two implementations are similar. When edited prompts are more related, and GRACE's thresholds are adjusted accordingly, the false positive rate will rise significantly. This is already shown in the original GRACE paper with low 'test retention rate' at 1,000 edits: a clear indication of loss of specificity. This is an expected trade-off for edits aiming to generalise. 

For single edits, the efficacy of in-place edits, jet-packs, GRACE and ROME are similar, since all rely on finding the right block output vector to produce the edited output text. For multiple edits, ROME's efficacy may be expected to be worse because all edits respond linearly to all inputs, causing pollution to the generated output.

\begin{table}[h!]
    \centering
    \caption{\label{tab:comp_rome_grace}Perplexity ratio after a single edit, measured across 1,000 edits from the MCF dataset.}
    \scalebox{0.7}{
        \begin{tabular}{lcc}
             & \texttt{llama-3-8b} - layer 17 & \texttt{gpt-j-6b} - layer 18 \\
            \midrule
            ROME                                & $1.0363\pm 0.0158$     &  $1.0680\pm0.0427$    \\
            \textbf{Stealth Edit (in-place)}    & $1.0007\pm 0.0007$     &  $1.0008\pm0.0007$    \\
        \end{tabular}
    }
\end{table}

\section{Proofs of theoretical results}\label{sec:proofs}

\subsection{Proof of Theorem~\ref{thm:randomised-input}}\label{sec:randomised-input:proof}

We prove a generalisation of Theorem~\ref{thm:randomised-input} as Theorem~\ref{thm:randomised-input-general}.
Theorem~\ref{thm:randomised-input} then follows as a corollary when $c = 0$.

\begin{theorem}[Selectivity of stealth edits]\label{thm:randomised-input-general}
    Suppose that a stealth edit is implanted using the detector $f$ defined in~\eqref{eq:detection-func}, for a fixed trigger direction $\trigdir \in \mathbb{S}^{d-1}$, threshold $\theta \geq 0$, gain $\alpha > 0$, and centre point $c \in \mathbb{R}^d$ with $\|c\| < 1 - \theta$.
    Suppose test prompts are sampled from a probability distribution $D$ on prompts $\mathcal{P}$, and let $D_{\feature}$ denote the distribution induced by the feature map $\feature$ defined in~\eqref{eq:feature-map}.
    Then, the probability that the edit is activated by a prompt $p$ sampled from $D$ decreases exponentially with the intrinsic dimensionality of $D_{\feature}$.
    Specifically,
    \begin{align*}
        P \big( p \sim D : 
        f(\inputmap(p); \trigdir, \theta, \alpha) \geq 0 
        \big)
        \leq 2^{-\frac{1}{2}(1 + n_{D_{\feature}}(\delta))},
    \end{align*}
    where
    \begin{align}\label{eq:randomised-input:delta-depends-trig}
        \delta = 
            \dfrac{2(1 - \theta - \langle \trigdir, c \rangle)^2}{\|\trigdir - c\|^2} - 2.
    \end{align}
    The bound~\eqref{eq:randomised-input:probability-bound} also holds with $\delta = \hat{\delta}$ independent of $\trigdir$, where
    \begin{align}\label{eq:randomised-input:delta-independent-trig}
        \hat{\delta} = 
        \begin{cases}
            2\theta \dfrac{\theta - 2(1 - \|c\|)}{(1 - \|c\|)^2}
            &\text{ if } \, \theta < \|c\|(1 - \|c\|),
            \\[1.3em]
            -2(2\theta + \|c\|^2) 
            &\text{ otherwise}.
        \end{cases}
    \end{align}
    In particular, when $c = 0$ and $\theta \in [0, 1]$, it follows that $\delta = 2\theta(\theta - 2) = \hat{\delta}$.
\end{theorem}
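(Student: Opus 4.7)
My plan is first to unfold the activation event. Since $\alpha > 0$, the condition $f(\inputmap(p); \trigdir, \theta, \alpha) \geq 0$ is equivalent to $\langle \nu(\inputmap(p)) - \trigdir, \trigdir - c \rangle \geq -\theta$. Using $\feature = \nu \circ \inputmap$ and expanding, this becomes $\langle \feature(p), \trigdir - c \rangle \geq 1 - \langle \trigdir, c \rangle - \theta$. Since $\feature$ maps into $\mathbb{S}^{d-1}$, this carves out a spherical cap: letting $v = (\trigdir - c)/\|\trigdir - c\|$ and $r = (1 - \langle \trigdir, c \rangle - \theta)/\|\trigdir - c\|$, the event is $\{\langle \feature(p), v \rangle \geq r\}$, with $r \geq 0$ guaranteed by the assumption $\|c\| < 1 - \theta$.

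\textbf{Squaring and a geometric inequality.} Let $A$ denote this cap event. The key trick is that, for an independent copy $y = \feature(q)$ of $x = \feature(p)$, independence gives $P(A)^2 = P(A(x) \cap A(y))$. I would then prove the geometric lemma: if two unit vectors $x, y$ both satisfy $\langle x, v \rangle, \langle y, v \rangle \geq r \geq 0$, then $\langle x, y \rangle \geq 2r^2 - 1$. This follows by decomposing $x = (x \cdot v) v + x^{\perp}$ and $y = (y \cdot v) v + y^{\perp}$, using $(x \cdot v)(y \cdot v) \geq r^2$, and bounding $|\langle x^{\perp}, y^{\perp} \rangle| \leq \sqrt{(1 - r^2)(1 - r^2)} = 1 - r^2$ via Cauchy-Schwarz and the unit-norm constraint. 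Subtracting $\|y\|^2 = 1$ gives $\langle x - y, y \rangle \geq 2r^2 - 2 =: \delta$, so $A(x) \cap A(y) \subseteq \{\langle x - y, y \rangle \geq \delta\}$. Therefore $P(A)^2 \leq P(\langle x - y, y \rangle \geq \delta) = 2^{-(1 + n(D_{\feature}, \delta))}$ by Definition~\ref{def:intrinsic-dim}. Substituting $r$ back in yields the trigger-dependent formula~\eqref{eq:randomised-input:delta-depends-trig}, and taking square roots completes that part of the bound.

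\textbf{Optimising over $\trigdir$ and the main obstacle.} For the universal bound I would minimise $\delta(\trigdir)$ over $\trigdir \in \mathbb{S}^{d-1}$. The dependence passes entirely through $t := \langle \trigdir, c \rangle \in [-\|c\|, \|c\|]$, so this reduces to a one-variable problem on a closed interval. A routine derivative calculation places the interior critical point at $t^{*} = \theta + \|c\|^2$, which lies in the feasible interval precisely when $\theta \leq \|c\|(1 - \|c\|)$; otherwise the minimum is attained at the boundary $t = \|c\|$. Evaluating $\delta$ at these two candidates produces the two-case formula for $\hat{\delta}$ in~\eqref{eq:randomised-input:delta-independent-trig}. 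I expect the main obstacle to be the case split and algebraic matching to the stated closed forms, and in particular verifying the degenerate limit $c = 0$ (where only $t = 0$ is admissible and the cases must collapse into the single expression $\delta = 2\theta(\theta - 2)$). Once the minimisation is carried out, the final bound follows from monotonicity of $n(D_{\feature}, \cdot)$ applied to the trigger-dependent bound established in the previous step.
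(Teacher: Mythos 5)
Your proposal is correct and follows essentially the same route as the paper's proof: reduce the activation event to membership of a spherical cap, square the probability via an independent copy of the test prompt, lower-bound $\langle x - y, y\rangle$ over pairs in the cap, and invoke Definition~\ref{def:intrinsic-dim}. Your orthogonal-decomposition-plus-Cauchy--Schwarz lemma is an algebraic rendering of the paper's geometric ``opposite points of the cap'' argument and yields the identical value $\delta = 2r^2 - 2$; your explicit critical-point computation makes precise what the paper compresses into ``a standard argument shows''.

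One concrete warning about the step you yourself flag as the main obstacle. Carrying the minimisation through, the interior critical point $t^{*} = \theta + \|c\|^2$ (feasible precisely when $\theta \leq \|c\|(1 - \|c\|)$, as you say) evaluates to $-2(2\theta + \|c\|^2)$, while the boundary point $t = \|c\|$ evaluates to $2\theta(\theta - 2(1 - \|c\|))/(1 - \|c\|)^2$. These are exactly the two expressions appearing in~\eqref{eq:randomised-input:delta-independent-trig}, but attached to the \emph{opposite} conditions from those printed there. Your own proposed sanity check at $c = 0$ confirms which assignment is right: there the feasible interval degenerates to the boundary point $t = 0$, and it is the boundary formula that reproduces $2\theta(\theta - 2)$, whereas the printed ``otherwise'' branch would give $-4\theta$. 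So when you perform the ``algebraic matching to the stated closed forms'', do not force your (correct) computation to agree with the printed case split; with the conditions interchanged, $\hat{\delta}$ is the true minimum of $\delta$ over admissible $\trigdir$ and the bound follows by monotonicity of $n(D_{\feature}, \cdot)$ exactly as you outline.
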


\begin{proof}
    If a prompt $p$ is such that $f(\inputmap(p)) \geq 0$, then definition~\eqref{eq:detection-func} implies that $\feature(p)$ belongs to
    \begin{align*}
        C_{\trigdir} = \{z \in \mathbb{R}^d \suchthat \langle z - \trigdir, \trigdir - c \rangle + \theta \geq 0 \text{ and } \|z\| = 1\},
    \end{align*}
    which geometrically forms a cap of the sphere $\|z\| = 1$.

    The idea of the proof is to show that $L(x, y) \geq \delta \geq \hat{\delta}$, where where $L(x, y) = \langle x - y, y \rangle$ and $\delta$ is defined in~\eqref{eq:randomised-input:delta-depends-trig} and $\hat{\delta}$ in~\eqref{eq:randomised-input:delta-independent-trig}.
    Then for any $p, q \in \mathcal{P}$, the conditions $f(\inputmap(p)) \geq 0$ and $f(\inputmap(q)) \geq 0$ imply that $\feature(p), \feature(q) \in C_{\trigdir}$, and therefore $\langle \feature(p) - \feature(q), \feature(q) \rangle \geq \delta$.
    From this, we may conclude that for $p$ and $q$ sampled independently from $D$,
    \begin{align*}
        P(p \sim D \suchthat f(\inputmap(p)) \geq 0)
        &
        =
        P(p, q \sim D \suchthat f(\inputmap(p)) \geq 0 \text{ and } f(\inputmap(p)) \geq 0)^{1/2}
        \\&
        \leq
        P(p, q \sim D \suchthat \langle \feature(p) - \feature(q), \feature(q) \rangle \geq \delta )^{1/2}
        \\&
        =
        2^{-\frac{1}{2}(1 + n_{D_{\feature}}(\delta))},
    \end{align*}
    which would obtain the result in the theorem.
    The remainder of the proof is devoted to showing that $L(x, y) \geq \delta \geq \hat{\delta}$ with $\delta$ as in~\eqref{eq:randomised-input:delta-depends-trig} and $\hat{\delta}$ in~\eqref{eq:randomised-input:delta-independent-trig}, which will prove the stated result.

    We therefore seek to minimise the function $L(x, y) = \langle x - y, y \rangle = \langle x, y \rangle - 1$ over $x, y \in C_{\trigdir}$.
    Since $\|c\| < 1 - \theta$ and $\|\trigdir\| = 1$, it follows that $\theta < 1 - \langle \trigdir, c \rangle$, and so the cap $C_{\trigdir}$ is at most a hemisphere.
    A pair of points $x^{*}, y^{*} \in C_{\trigdir}$ is therefore a minimiser of $L$ when the angle $\gamma \in [0, \pi]$ between them is maximised.

    Since $C_{\trigdir}$ is rotationally symmetric about the axis $\trigdir - c$, this occurs when $x^{*}$ and $y^{*}$ are at opposite points on the cap.
    In this case, the angle between $\trigdir - c$ and $x^{*}$ or $y^{*}$ must be $\frac{\gamma}{2}$.
    The standard properties of the inner product imply that
    \begin{align*}
        \cos \frac{\gamma}{2}
        =
        \frac{\langle x^{*}, \trigdir - c \rangle}{\|x^{*}\|\|\trigdir - c\|}
        =
        \frac{1 - \theta - \langle \trigdir, c \rangle}{\|\trigdir - c\|},
    \end{align*}
    and therefore, since $\|x^{*}\| = \|y^{*}\| = 1$,

    \begin{align}\label{eq:randomised-input:minimise-L}
        \min_{x, y \in C_{\trigdir}} L(x, y) 
        &
        = 
        \langle x^{*}, y^{*} \rangle - 1 
        = 
        \cos\gamma - 1
        = 
        2 \cos^2\frac{\gamma}{2} - 2
        =
        \frac{2(1 - \theta - \langle \trigdir, c \rangle)^2}{\|\trigdir - c\|^2} - 2
        =
        \delta
    \end{align}

    with $\delta$ defined as in~\eqref{eq:randomised-input:delta-depends-trig}.

    To prove the result with $\delta = \hat{\delta}$ defined in~\eqref{eq:randomised-input:delta-independent-trig}, we further derive a lower bound on $L(x, y)$ which is independent of $\trigdir$.
    Let $\mu \in [0, \pi]$ be the angle between $\trigdir$ and $c$, and let $s = \cos(\mu) \in [-1, 1]$.
    Recalling the bound of~\eqref{eq:randomised-input:minimise-L}, a standard argument shows that
    \begin{align*}
        \min_{x, y \in C_{\trigdir}} L(x, y)
        =
        \delta
        =
        \frac{2(1 - \theta - s\|c\|)^2}{1 - 2s\|c\| + \|c\|^2} - 2
        \geq
        \min_{t \in [-1, 1]} 
        \frac{2(1 - \theta - t\|c\|)^2}{1 - 2t\|c\| + \|c\|^2} - 2
        =
        \hat{\delta}.
    \end{align*}
    The second part of the result follows by the same argument as before.
\end{proof}

\subsection{Proof of Theorem~\ref{thm:randomised-trigger}}\label{sec:randomised-trigger:proof}

We prove a generalisation of Theorem~\ref{thm:randomised-trigger} as Theorem~\ref{thm:randomised-trigger-general}.
Theorem~\ref{thm:randomised-trigger} then follows as a corollary when $c = 0$.

\begin{theorem}[Stealth attacks with randomised triggers]\label{thm:randomised-trigger-general}
    Let $T$ denote a probability distribution for sampling a stealth attack trigger prompt from the set $\mathcal{P}$ of prompts, and let $T_{\feature}$ denote the distribution induced by the feature map $\feature$ defined in~\eqref{eq:feature-map}.
    Suppose that the stealth attack is implanted using the detector $f$ in~\eqref{eq:detection-func} for trigger $p_{\trigger}$ sampled from $T$, with threshold $\theta \geq 0$, gain $\alpha > 0$, and centre point $c \in \mathbb{R}^d$ with $\|c\| < 1 - \theta$.
    Then, for any fixed test prompt $p \in \mathcal{P}$, the probability that the stealth attack is activated by $p$ decreases exponentially with the intrinsic dimensionality of $T_{\feature}$.
    Specifically,
    \begin{align*}
        P(p_{\trigger} \sim T : f(\inputmap(p); \phi(p_{\trigger}), \theta, \alpha) \geq 0 ) \leq 2^{-\frac{1}{2}(1 + n_{T_{\feature}}(\epsilon))},
    \end{align*}
    where
    \begin{align}\label{eq:randomised-trigger:epsilon-depends-trig}
        \epsilon = 
        \dfrac{2(1 - \theta + \langle \feature(p), c \rangle)^2}{\|\feature(p) + c\|^2} - 2.
    \end{align}
    The bound also holds with $\epsilon = \hat{\delta}$ defined in~\eqref{eq:randomised-input:delta-independent-trig}, which is independent of the test prompt $p$.

    In particular, when $c = 0$ and $\theta \in [0, 1]$, it follows that $\epsilon = 2\theta(\theta - 2) = \hat{\delta}$.
\end{theorem}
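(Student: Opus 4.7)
\subsection*{Proof proposal for Theorem~\ref{thm:randomised-trigger-general}}

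The plan is to mirror the geometric argument used for Theorem~\ref{thm:randomised-input-general}, with the roles of the fixed and randomly sampled prompts exchanged. Now the test prompt $p$ is fixed (so $y := \feature(p) \in \mathbb{S}^{d-1}$ is a fixed unit vector) while $x := \feature(p_{\trigger})$ is the random unit vector sampled (through $T_{\feature}$) on the sphere. I would first rewrite the activation condition $\langle y - x, x - c\rangle + \theta \geq 0$: since $\|x\| = 1$, it reduces to the linear inequality
\begin{equation*}
    \langle y + c, x \rangle \;\geq\; 1 - \theta + \langle y, c\rangle .
\end{equation*}
This identifies the set of activating triggers as a spherical cap $C_{p}$ on $\mathbb{S}^{d-1}$ with axis $(y + c)/\|y+c\|$ and half-angle $\beta$ given by $\cos\beta = (1 - \theta + \langle y, c\rangle)/\|y+c\|$. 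The hypothesis $\|c\| < 1 - \theta$ together with $\|y\| = 1$ guarantees $\cos\beta \geq 0$, so $C_p$ is at most a hemisphere, in direct analogy with the cap $C_{\trigdir}$ used earlier.

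Next I would transplant the pairwise-separability argument. For any two activating triggers $x_1, x_2 \in C_p$, the angle between them is at most $2\beta$, so
\begin{equation*}
    \langle x_1 - x_2, x_2 \rangle
    \;=\; \langle x_1, x_2\rangle - 1
    \;\geq\; \cos(2\beta) - 1
    \;=\; 2\cos^2\beta - 2
    \;=\; \epsilon ,
\end{equation*}
with $\epsilon$ exactly as in~\eqref{eq:randomised-trigger:epsilon-depends-trig}. The squaring trick from the previous proof then applies verbatim: sampling two triggers $p_{\trigger}^1, p_{\trigger}^2$ independently from $T$ and writing
\begin{equation*}
    P(p_{\trigger} \sim T : \text{activation})
    \;=\; P\bigl(\text{both } p_{\trigger}^1, p_{\trigger}^2 \text{ activate}\bigr)^{1/2}
    \;\leq\; P\bigl(\langle \feature(p_{\trigger}^1) - \feature(p_{\trigger}^2), \feature(p_{\trigger}^2)\rangle \geq \epsilon\bigr)^{1/2},
\end{equation*}
and the last probability is $2^{-1 - n_{T_{\feature}}(\epsilon)}$ by Definition~\ref{def:intrinsic-dim}, yielding the claimed bound.

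For the test-prompt-independent bound with $\epsilon = \hat{\delta}$, I would parameterise by $t = \cos\nu \in [-1, 1]$, where $\nu$ is the angle between $y$ and $c$, so that $\langle y, c\rangle = t\|c\|$ and $\|y + c\|^2 = 1 + 2t\|c\| + \|c\|^2$, giving
\begin{equation*}
    \epsilon(t) \;=\; \frac{2(1 - \theta + t\|c\|)^2}{1 + 2t\|c\| + \|c\|^2} - 2 .
\end{equation*}
Substituting $t \mapsto -s$ converts this into the expression for $\delta(s)$ analysed in the proof of Theorem~\ref{thm:randomised-input-general}, and since $s$ and $t$ both range over $[-1,1]$ the infima agree and equal $\hat{\delta}$. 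The special case $c = 0$ then follows immediately, as $\epsilon$ collapses to $2(1-\theta)^2 - 2 = 2\theta(\theta-2)$ independently of $y$. The only place requiring real care is the opening algebraic reduction: one must check that the resulting set of activating $x$'s truly forms a spherical cap despite the sign flip (axis $y+c$ here versus $\trigdir - c$ there), but this is precisely why the substitution $t \mapsto -s$ rescues the earlier min-calculation unchanged, and there is no further obstacle beyond bookkeeping.
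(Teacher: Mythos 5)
Your proposal is correct and follows essentially the same route as the paper's proof: you identify the set of activating triggers as a spherical cap about the axis $\feature(p)+c$ (the paper gets the same cap by completing the square, you by using $\|x\|=1$ to linearise the condition), bound the pairwise quantity $\langle x_1 - x_2, x_2\rangle$ by $2\cos^2\beta - 2 = \epsilon$ via the rotational symmetry of the cap, and then apply the independence/squaring argument together with Definition~\ref{def:intrinsic-dim}. Your handling of the prompt-independent bound via the substitution $t \mapsto -s$ is just a tidy way of phrasing the paper's direct minimisation over $t \in [-1,1]$, so no substantive difference remains.
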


\begin{proof}
    If the sampled trigger prompt $p_{\trigger}$ is such that $f(\inputmap(p); \feature(p_{\trigger}), \theta, \alpha) \geq 0$, then $\feature(p_{\trigger})$ belongs to
    \begin{align*}
        V_{p} = \{ z \in \mathbb{R}^d \suchthat \langle \feature(p) - z, z - c \rangle + \theta \geq 0 \text{ and } \|z\| = 1 \},
    \end{align*}
    which describes the intersection of a sphere and a ball since
    \begin{align*}
        \langle \feature(p) - z, z - c \rangle + \theta \geq 0
        \iff
        \| z - \frac{1}{2} (\feature(p) + c) \|^2 \leq \theta + \|\frac{1}{2} (\feature(p) - c) \|^2
    \end{align*}
    and the latter describes a ball of points $z$ centred at $\frac{1}{2} (\feature(p) + c)$.
    The set $V_{p}$ is therefore a cap of the sphere $\|z\| = 1$.

    As in Theorem~\ref{thm:randomised-input}, we seek to minimise the objective function $L(x, y) = \langle x, y \rangle - \|y\|^2$ over $V_{p}$.
    Once again, a pair of points $x^{*}, y^{*} \in C_{\trigdir}$ is a minimiser of $L$ when the angle $\gamma \in [0, \pi]$ between them is maximised.
    Since $V_p$ is rotationally symmetric about the axis $\feature(p) + c$, it follows that when $\gamma$ is maximised the angle between $\feature(p) + c$ and $x^{*}$ or $y^{*}$ must be $\frac{\gamma}{2}$.
    The defining properties of $V_p$ imply that $\|x^{*}\| = 1$ and
    \begin{align*}
        \cos\frac{\gamma}{2} 
        = 
        \frac{\langle x^{*}, \feature(p) + c \rangle}{\|\feature(p) + c\|}
        =
        \frac{1 - \theta + \langle \feature(p), c \rangle }{\|\feature(p) + c\|},
    \end{align*}
    and thus for $x, y \in V_{p}$,
    \begin{align*}
        \min_{x, y \in V_p} L(x, y)
        =
        \langle x^{*}, y^{*} \rangle - 1
        =
        \cos(\gamma) - 1
        =
        2 \cos^2 \frac{\gamma}{2} - 2 
        =
        2 \frac{(1 - \theta + \langle \feature(p), c \rangle )^2}{\|\feature(p) + c\|^2} - 2
        =
        \epsilon.
    \end{align*}
    The first result, therefore, follows by the same argument used in Theorem~\ref{thm:randomised-input}.

    We prove the second result in the statement of the theorem by finding a lower bound on $L$ which is independent of the prompt $p$.
    Let $\xi \in [0, \pi]$ denote the angle between $\feature(p)$ and $c$, and let $r = \cos(\xi) \in [-1, 1]$.
    A standard minimisation argument shows that
    \begin{align*}
        \min_{x, y \in V_p} L(x, y)
        =
        \epsilon
        =
        2 \frac{(1 - \theta + r \|c\| )^2}{1 + 2r \|c\| + \|c\|^2} - 2
        \geq
        \min_{t \in [-1, 1]}
        2 \frac{(1 - \theta + t \|c\| )^2}{1 + 2t \|c\| + \|c\|^2} - 2
        =
        \hat{\delta}
        .
    \end{align*}
    The result, therefore, follows by arguing as before.
\end{proof}

\end{document}